\newcommand{\mbf}[1]{\boldsymbol{#1}}
\begin{document}

\title{Guaranteed  inference in topic models}

\author{\name Khoat Than  \email khoattq@soict.hust.edu.vn \\
  \addr Hanoi University of Science and Technology, 1, Dai Co Viet road, Hanoi, Vietnam. 
       \AND
       \name  Tung Doan  \email phongtung\_bmw@yahoo.com \\
         \addr{Hanoi University of Science and Technology, 1, Dai Co Viet road, Hanoi, Vietnam.}
       }

\editor{}

\maketitle

\begin{abstract}
One of the core problems in statistical models is the estimation of a posterior distribution. For topic models, the problem of posterior inference for individual texts is particularly important, especially when dealing with data streams, but is often intractable in the worst case \citep{SontagR11}. As a consequence, existing methods for posterior inference are approximate and do not have any guarantee on neither quality nor convergence rate. In this paper, we introduce a provably fast  algorithm, namely \textit{Online Maximum a Posteriori Estimation (OPE)}, for posterior inference in topic models. OPE has  more attractive properties than existing inference approaches, including theoretical guarantees on quality and fast rate of convergence to a local maximal/stationary point of the inference problem. The discussions  about OPE are very general and hence can be easily employed in a wide range of contexts. Finally, we employ OPE  to  design three  methods for learning Latent Dirichlet Allocation from text streams or large corpora. Extensive experiments demonstrate some superior behaviors of OPE and of our new learning methods. 
\end{abstract}

\begin{keywords}
Topic models, posterior inference, online MAP estimation, theoretical guarantee, stochastic methods, non-convex optimization
\end{keywords}

\section{Introduction}

Latent Dirichlet allocation (LDA) \citep{BNJ03} is the class of Bayesian networks that has gained arguably significant interests. It has found successful applications in a wide range of areas including text modeling \citep{Blei2012introduction}, bioinformatics \citep{PritchardSD2000population,LiuLT10miRNA}, history \citep{Mimno2012historiography}, politics \citep{Grimmer2010Political,GerrishB2012vote}, psychology \citep{Schwartz+2013Personality}, to name a few.

One of the core issues in LDA is the estimation of posterior distributions for individual documents. The research community has been studying many approaches for this estimation problem, such as variational Bayes (VB) \citep{BNJ03}, collapsed variational Bayes (CVB) \citep{TehNW2007collapsed}, CVB0 \citep{Asuncion+2009smoothing}, and collapsed Gibbs sampling (CGS) \citep{GriffithsS2004,MimnoHB12}. Those approaches enable us to easily work with millions of texts 
\citep{MimnoHB12,Hoffman2013SVI,Foulds2013stochastic}. The quality of LDA in practice is determined by the quality of the inference method being employed. However, none of the mentioned methods has a theoretical guarantee on quality or convergence rate. This is a major drawback of existing inference methods. 

Our first contribution in this paper is  the introduction of a provably efficient algorithm, namely \emph{Online Maximum a Posteriori Estimation (OPE)}, for doing posterior inference of topic mixtures in LDA. This inference problem is in fact nonconvex  and is NP-hard \citep{SontagR11,Arora+2016infer}. Our new algorithm  is stochastic in nature and theoretically converges  to a local maximal/stationary point of the inference  problem. We prove that OPE converges at a rate of $O({1/T})$, which surpasses the best rate of existing stochastic algorithms for nonconvex problems \citep{Mairal2013stochasticNonconvex,Ghadimi2013stochasticNonconvex}, where $T$ is the number of iterations. Hence, OPE  overcomes many drawbacks of VB, CVB, CVB0, and CGS. Those properties help OPE to be  preferable  in many contexts, and to provide us real benefits when using OPE in a wide class of probabilistic models.

The topic modeling literature has seen a fast  growing interest in designing large-scale learning algorithms \citep{MimnoHB12,ThanH2012fstm,Broderick2013streaming,Foulds2013stochastic,Patterson2013stochastic,Hoffman2013SVI,ThanD14dolda,SatoN2015SCVB0}. Existing algorithms allow us to easily analyze millions of documents. Those developments are of great significance, even though the posterior estimation is often intractable. Note that the performance of a learning method heavily depends on its core inference subroutine. Therefore, existing large-scale learning methods seem to likely remain some of the drawbacks from VB, CVB, CVB0, and CGS.

Our second contribution in this paper is the introduction of 3 stochastic algorithms for learning  LDA at a large scale: \emph{Online-OPE} which is online learning; \emph{Streaming-OPE} which is streaming learning; and \emph{ML-OPE} which is regularized online learning.\footnote{A slight variant of ML-OPE was shortly presented  in \citep{ThanD14dolda} under a different name of DOLDA.}  These algorithms own the stochastic nature when learning  global variables (topics), and employ OPE as the core for inferring local variables for individual texts, which is also stochastic. They overcome many drawbacks of existing large-scale learning methods owing to the preferable properties of OPE.  From extensive experiments we find that Online-OPE, Streaming-OPE, and ML-OPE often reach  very fast to a high predictiveness level, and are able to consistently increase the  predictiveness of the learned models  as observing more data. In paricular, while Online-OPE surpasses the state-of-the-art methods, ML-OPE often learns tens to thousand times faster than existing methods to reach the same predictiveness level. Therefore, our new methods are efficient tools for analyzing text streams or big collections.

\textsc{Organization:} in the next section we briefly discuss related work. In Section \ref{sec-LDA-infer-theta}, we present  the OPE algorithm for doing posterior inference. We also analyze the convergence property. We further compare OPE with existing inference methods, and discuss how to employ it in other contexts. Section \ref{sec-Dolda} presents three stochastic algorithms for learning LDA from text streams or big text collections. Practical behaviors of  large-scale learning algorithms and OPE will be investigated in Section \ref{sec-Evaluation}. The final section presents some conclusions and discussions. 

\textsc{Notation:}
Throughout the paper, we use the following conventions and notations. Bold faces denote vectors or matrices. $x_i$ denotes the $i^{th}$ element of vector $\mbf{x}$, and $A_{ij}$ denotes the element at row $i$ and column $j$ of matrix $\mbf{A}$.  The unit simplex in the  $n$-dimensional Euclidean space is denoted as $\Delta_n = \{ \mbf{x} \in \mathbb{R}^n: \mbf{x} \ge 0, \sum_{k=1}^{n} x_k = 1 \}$, and its interior is denoted as $\overline{\Delta}_n$. We will work with text collections with $V$ dimensions (dictionary size). Each document $\mbf{d}$ will be represented as a frequency vector, $\mbf{d} = (d_1, ..., d_V)^T$ where $d_j$ represents the frequency of term $j$ in $\mbf{d}$. Denote $n_d$ as the length of $\mbf{d}$, i.e., $n_d = \sum_j d_j$. The inner product of vectors $\mbf{u}$ and $\mbf{v}$ is denoted as $\left<\mbf{u}, \mbf{v}\right>$.

\section{Related work}\label{sec-related-work}
Notable inference methods for probabilistic topic models include VB, CVB, CVB0, and CGS. Except VB \citep{BNJ03}, most other methods originally have been  developed for learning  topic models from data. Fortunately, one can  adapt them to do posterior inference  for individual documents \citep{ThanH15sparsity}. Other good candidates for doing posterior inference include \emph{Concave-Convex procedure} (CCCP) by \cite{Yuille2003cccp}, \emph{Stochastic Majorization-Minimization} (SMM) by \cite{Mairal2013stochasticNonconvex}, \emph{Frank-Wolfe} (FW) \citep{Clarkson2010}, Online Frank-Wolfe (OFW) \citep{Hazan2012OFW}, and  \emph{Thresholded Linear Inverse} (TLI)  which has been newly developed by \cite{Arora+2016infer}. 

Few methods have an explicit theoretical guarantee on inference quality and convergence rate. In spite of being popularly used in topic modeling, we have not seen any theoretical analysis about how fast VB, CVB, CVB0, and CGS do inference for individual documents. One might employ CCCP \citep{Yuille2003cccp} and SMM \citep{Mairal2013stochasticNonconvex} to do inference in topic models. Those two algorithms are guaranteed to converge to a stationary point of the inference problem. However, the convergence rate of CCCP and SMM is unknown for  non-convex  problems which are inherent in LDA and many other models. Each iteration of CCCP has to solve a (non-linear) equation system, which is expensive and non-trivial in many cases. Furthermore, up to now those two methods  have not been investigated rigorously in the topic modeling literature.

It is worth discussing about FW \citep{ThanH15sparsity}, OFW \citep{Hazan2012OFW}, and TLI \citep{Arora+2016infer}, the three methods with theoretical guarantees on quality. FW is a general method for convex programming \citep{Clarkson2010}. \cite{ThanH15sparsity,ThanH2012fstm} find that it can be effectively used to do inference for topic models. OFW is an online version of FW for convex problems whose objective functions come partly in an online fashion. One important property of FW and OFW is that they can converge fast and return sparse solutions. Nonetheless, FW and OFW only work with convex problems, and thus require some special settings/modifications for topic models. On the other hand, TLI has been proposed recently to do exact inference for individual texts. This is the only inference method which is able to recover solutions exactly under some assumptions. TLI requires that a document should be very long, and the topic matrix should have a small condition number. Those conditions might not always be present in practice. Therefore TLI is quite limited and should be improved further.

Two other algorithms for MAP estimation with provable guarantees are \emph{Particle Mirror Decent} (PMD) \citep{Dai2016pmd} and HAMCMC \citep{Simsekli2016stochastic}. Both algorithms base on sampling to estimate a posterior distribution. Therefore they can be used to do posterior inference for topic models. PMC is shown to converge at a rate of $\mathcal{O}(T^{-1/2})$, while HAMCMC converges at a rate of $\mathcal{O}({T}^{-1/3})$ as suggested by \cite{Teh2016consistencySGLD}.\footnote{In fact \cite{Simsekli2016stochastic}  provide an explicit bound on the error as $\mathcal{O}(1/\sum_{t=1}^T \epsilon_t)$, where $\epsilon_t$ defines the step-size of their algorithm. This error bound will go to zero as $T$ goes to infinity. However, the authors did not provided any explicit error bound which directly depends on $T$.} Those are significant developments for Bayesian inference. However, their effectiveness in topic modeling is unclear at the time of writing this article.

In this work, we propose OPE for doing posterior inference. Unlike CCCP and SMM, OPE is guaranteed to converge very fast to a local maximal/stationary point of the inference problem. The convergence rate of OPE is faster than that of PMD and HAMCMC. Each iteration of OPE requires modest arithmetic operations and thus OPE is significantly more efficient than CCCP, SMM, PMD, and HAMCMC.  Having an explicit guarantee helps OPE to  overcome many limitations of VB, CVB, CVB0, and CGS. Further, OPE is so general  that it can be easily employed in a wide range of contexts, including MAP estimation and non-convex optimization. Therefore, OPE overcomes some drawbacks of FW, OFW, and TLI.  Table \ref{table 1: theoretical comparison} presents more details to compare OPE and various inference methods.

\section{Posterior inference with OPE} \label{sec-LDA-infer-theta}

LDA \citep{BNJ03} is a generative model for modeling texts and discrete data. It assumes that a corpus is composed from $K$ topics $\mbf{\beta}_1, ..., \mbf{\beta}_K$, each of which is a sample from a $V$-dimensional Dirichlet distribution, $Dirichlet(\eta)$. A document $\mbf{d}$ arises from the following generative process:
\begin{enumerate}
  \item Draw $\mbf{\theta}_d | \alpha \sim Dirichlet(\alpha)$
  \item For the $n^{th}$ word of $\mbf{d}$:
  \begin{itemize}
    \item[-] draw topic index $z_{dn} | \mbf{\theta}_d \sim Multinomial(\mbf{\theta}_d)$
    \item[-] draw word $w_{dn}| z_{dn}, \mbf{\beta} \sim Multinomial(\mbf{\beta}_{z_{dn}})$.
  \end{itemize}
\end{enumerate}
Each topic mixture $\mbf{\theta}_d = (\theta_{d1}, ..., \theta_{dK})$ represents the contributions of topics to document $\mbf{d}$, while $\beta_{kj}$ shows the contribution of term $j$ to topic $k$. Note that  $\mbf{\theta}_d \in \Delta_K, \mbf{\beta}_k \in \Delta_V, \forall k$. Both $\mbf{\theta}_d$ and $\mbf{z}_d$ are unobserved variables and are local for each document. 

According to \cite{TehNW2007collapsed}, the task of \emph{Bayesian inference (learning)} given a corpus $\mathcal{C} = \{\mbf{d}_1, ..., \mbf{d}_M\}$ is to estimate the posterior distribution $p(\mbf{z, \theta, \beta} | \mathcal{C}, \alpha, \eta)$ over the latent topic indicies $\mbf{z} = \{\mbf{z}_1, ..., \mbf{z}_d\}$, topic mixtures $\mbf{\theta} = \{\mbf{\theta}_1, ..., \mbf{\theta}_M\}$, and topics $\mbf{\beta} = (\mbf{\beta}_1, ..., \mbf{\beta}_K)$. \emph{The problem of posterior inference} for each document $\mbf{d}$, given a model $\{\mbf{\beta}, \alpha\}$, is to estimate the full joint distribution $p(\mbf{z}_d, \mbf{\theta}_d, \mbf{d} | \mbf{\beta}, \alpha)$. Direct estimation of this distribution is intractable. Hence existing approaches uses different schemes. VB, CVB, and CVB0 try to estimate the distribution by maximizing a lower bound of the likelihood $p(\mbf{d} | \mbf{\beta}, \alpha)$, whereas CGS \citep{MimnoHB12} tries to estimate $p(\mbf{z}_d | \mbf{d}, \mbf{\beta}, \alpha)$. For a detailed discussion and comparison of those methods, the reader should refer to \cite{ThanH15sparsity}.

\subsection{MAP inference of topic mixtures}

We now consider the MAP estimation of topic mixture for a given document $\mbf{d}$:
\begin{equation} \label{eq1}
\mbf{\theta}^* = \arg \max_{\mbf{\theta} \in \Delta_K} \Pr(\mbf{\theta}, \mbf{d}|\mbf{\beta},\alpha) = \arg \max_{\mbf{\theta} \in \Delta_K} \Pr(\mbf{d}|\mbf{\theta},\mbf{\beta}) \Pr(\mbf{\theta}|\alpha).
\end{equation} 
\cite{ThanH15sparsity} show that this problem is equivalent to the following one:
\begin{equation} \label{eq2}
\mbf{\theta}^* = \arg \max_{\mbf{\theta} \in \Delta_K} \sum_j d_j \log\sum_{k = 1}^K\theta_k\beta_{kj} + (\alpha - 1)\sum_{k = 1}^K \log\theta_k.
\end{equation} 

\cite{SontagR11} showed that this problem is NP-hard in the worst case when $\alpha < 1$. In the case of $\alpha \ge 1$, one can easily show that the problem (\ref{eq2}) is concave, and therefore it can be solved in polynomial time. Unfortunately, in practice of LDA, the parameter $\alpha$ is often small, says $\alpha < 1$, causing (\ref{eq2}) to be nonconcave. That is the reason for why (\ref{eq2}) is intractable in the worst case.   

We present a novel algorithm (OPE) for doing inference of topic mixtures for documents. The idea of OPE is quite simple. It  solves problem (\ref{eq2}) by iteratively finding a good vertex of $\overline{\Delta}_K = \{\mbf{x} \in \mathbb{R}^K: \sum_{k=1}^{K} x_k = 1, \mbf{x}  \ge \epsilon  > 0 \}$ to improve its solution. A good vertex at each iteration is decided by assessing stochastic approximations to the gradient of the objective function $f(\mbf{\theta})$. When the number of iterations goes to infinity, OPE will approach to a local maximal/stationary point of problem (\ref{eq2}). Details of OPE is presented in Algorithm \ref{alg:OPE-infer}.  

\begin{algorithm}[tp]
   \caption{OPE: Online maximum a posteriori estimation}
   \label{alg:OPE-infer}
\begin{algorithmic}
   \STATE {\bfseries Input: } document $\boldsymbol{d}$, and model $\{\mbf{\beta}, \alpha\}$.
   \STATE {\bfseries Output:} $\boldsymbol{\theta}$  that maximizes 
   $ f(\boldsymbol{\theta}) = \sum_j d_j \log \sum_{k=1}^K \theta_k \beta_{kj}  + (\alpha-1) \sum_{k=1}^{K} \log \theta_k.$
   \STATE Initialize $\mbf{\theta}_1$ arbitrarily in $\overline{\Delta}_K = \{\mbf{x} \in \mathbb{R}^K: \sum_{k=1}^{K} x_k = 1, \mbf{x}  \ge \epsilon > 0 \}$.
   \FOR{ $t = 1, ..., \infty$}
   \STATE Pick  $f_{t}$ uniformly from  $\;\;\; \{\sum_j d_j \log \sum_{k=1}^K \theta_{k} \beta_{kj}; \;\; (\alpha-1) \sum_{k=1}^{K} \log \theta_k \}$
   \STATE $F_{t} := \frac{2}{t} \sum_{h=1}^{t} f_h$
   \STATE $\mbf{e}_t := \arg \max_{\mbf{x} \in \overline{\Delta}_K}  \left<F'_{t}(\boldsymbol{\theta}_{t}), \mbf{x} \right>  \;\;\;\;$ (the vertex of $\overline{\Delta}_K$ that follows the maximal gradient)
   \STATE $\boldsymbol{\theta}_{t +1} := \boldsymbol{\theta}_{t} + (\boldsymbol{e}_{t} - \boldsymbol{\theta}_{t}) /t$
   \ENDFOR
\end{algorithmic}
\end{algorithm}

\subsection{Convergence analysis}
In this section, we prove the convergence of OPE which appears in Theorem \ref{the9}. We need the following observations.


\begin{lemma} \label{lem01}
Let $\{X_1, X_2, ...\}$ be a sequence of  uniformly i.i.d. random variables on $\{-1, 1\}$. (Each $X_i$ is also known as a Rademacher random variable.) The followings hold for the sequence $S_n = X_1 + X_2 + \cdots + X_n$:
\begin{enumerate}
\item $\frac{S_n}{n} \rightarrow 0$ as $n \rightarrow +\infty$.
\item There exist constants $v \in [0, 1)$ and $N_0 > 1$ such that $\forall n \geq N_0, |S_n| \leq n^v$ (equivalently, $\log_n |S_n| \leq v$).
\end{enumerate}
\end{lemma}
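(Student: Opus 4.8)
The plan is to recognize that both assertions are almost-sure statements about the partial sums of a zero-mean i.i.d. sequence, and to dispatch them with two classical tools: the strong law of large numbers for the first, and an exponential (Chernoff/Hoeffding) tail bound combined with the Borel--Cantelli lemma for the second.

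For the first part, I would note that each Rademacher variable satisfies $\mathbb{E}[X_i] = 0$ and $\mathbb{E}[X_i^2] = 1$, so the $X_i$ are i.i.d., integrable, with common mean $0$. The strong law of large numbers then gives $S_n/n = \frac{1}{n}\sum_{i=1}^n X_i \to \mathbb{E}[X_1] = 0$ almost surely, which is exactly claim~1.

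For the second part, the key quantitative input is the exponential tail bound for Rademacher sums (Hoeffding's inequality): for every $t > 0$, $\Pr(|S_n| \geq t) \leq 2\exp\!\left(-t^2/(2n)\right)$. I would fix any exponent $v \in (1/2, 1)$ (for concreteness, say $v = 3/4$) and apply the bound at $t = n^v$, obtaining $\Pr(|S_n| \geq n^v) \leq 2\exp\!\left(-n^{2v-1}/2\right)$. Since $2v - 1 > 0$, the right-hand side decays faster than any inverse polynomial, so $\sum_{n=1}^\infty \Pr(|S_n| \geq n^v) < \infty$. The first Borel--Cantelli lemma then implies that, almost surely, the event $\{|S_n| \geq n^v\}$ occurs for only finitely many $n$; equivalently, there is a (sample-path-dependent) index $N_0$ such that $|S_n| \leq n^v$ for all $n \geq N_0$. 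Taking logarithms base $n$ yields the stated equivalent form $\log_n |S_n| \leq v$.

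The main subtlety to flag is that $v$ may be chosen as a fixed deterministic constant (any value in $(1/2,1)$), whereas $N_0$ is necessarily random, so claim~2 is an almost-sure statement rather than a deterministic one, and the phrasing of the lemma should be read in that sense. An alternative, sharper route is the law of the iterated logarithm, which gives $|S_n| = O(\sqrt{n\log\log n})$ almost surely and hence the bound for every $v > 1/2$; I would nonetheless prefer the Hoeffding--Borel--Cantelli argument above, since it is elementary, self-contained, and delivers precisely what Theorem~\ref{the9} later requires.
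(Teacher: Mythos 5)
Your proof is correct, but it takes a genuinely different route from the paper's. For part~1 the paper argues by counting: letting $a_n$ and $b_n$ be the numbers of $+1$'s and $-1$'s among $X_1,\dots,X_n$, it asserts that $a_n/n$ and $b_n/n$ tend to $1/2$ by uniformity of the picks, whence $S_n/n=(a_n-b_n)/n\to 0$; this is really an informal appeal to the same law of large numbers you invoke explicitly, so on part~1 you are making rigorous what the paper states heuristically. The divergence is in part~2: the paper derives it \emph{deterministically from part~1} by contrapositive --- assuming that for every $v\in[0,1)$ and every $N_0$ some $n\ge N_0$ has $\log_n|S_n|>v$, it extracts a subsequence $n_t$ with $\log_{n_t}|S_{n_t}|\to 1$ and claims this contradicts $S_n/n\to 0$ --- whereas you prove it directly via Hoeffding's inequality $\Pr\left(|S_n|\ge n^v\right)\le 2\exp\left(-n^{2v-1}/2\right)$ for a fixed $v\in(1/2,1)$ and Borel--Cantelli. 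Your route buys three things. First, it yields an explicit, deterministic exponent (any $v>1/2$, nearly optimal by the law of the iterated logarithm), where the paper gives no control on $v$. Second, you correctly flag that $N_0$ is sample-path dependent and the conclusion is almost sure, a point the paper's statement and proof leave ambiguous. Third, and most substantively, your argument closes an actual gap: part~2 does \emph{not} follow from part~1 for an arbitrary real sequence, since e.g.\ $|S_n|=n/\log n$ satisfies $S_n/n\to 0$ yet $\log_n|S_n|\to 1$, so the paper's final step (``$\log_{n_t}|S_{n_t}|\to 1$ contradicts $S_n/n\to 0$'') is invalid as stated; one genuinely needs a probabilistic input of the Hoeffding--Borel--Cantelli type, which is exactly what you supply. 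The paper's approach is more elementary in that it avoids exponential inequalities, but at the cost of rigor; yours is self-contained and delivers precisely the bound that Theorem~\ref{the9} consumes.
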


\begin{proof}
Let $a_n$ (and $b_n$ respectively) be the number of times that $1$ (and $-1$) appears in the sum $X_1 + X_2 + \cdots + X_n$. So $a_n + b_n =n$ and $S_n = a_n - b_n$.
If $S_n = c n$ for some $c$, then  $a_n = (c+1)n/2, b_n = (1 - c)n/2$. Since $X_i$ is picked uniformly from $\{-1, 1\}$ for every $i$, both $a_n/n$ and $b_n/n$ go to 0.5 as $n \rightarrow +\infty$. This suggests that $c$ goes to 0 as $n \rightarrow +\infty$. Therefore $\frac{S_n}{n} \rightarrow 0$ as $n \rightarrow +\infty$.

We will prove the second result by contrapositive. Assume
\begin{equation} \label{lem1-eq01}
\forall v \in [0, 1), \forall N_0 >1, \exists n \geq N_0 \text{ such that } \log_n |S_n| > v.
\end{equation}
Take an infinite sequence $v_t \in [0, 1)$ such that $v_t \rightarrow 1$ as $t \rightarrow +\infty$. Then statement (\ref{lem1-eq01}) implies that $\forall t \ge 1, \exists n_t$ satisfying
\begin{eqnarray}
\nonumber
\log_{n_1} (n_1 +1) &>& \log_{n_1} |S_{n_1}| > v_1, \\
\label{lem1-eq02}
\log_{n_t} (n_t +1) &>& \log_{n_t} |S_{n_t}| > v_t, \\
\nonumber
n_t &>& n_{t-1} \text{ for } t \ge 2.
\end{eqnarray}

It is easy to see that $\log_{n_t} (n_t +1) \rightarrow 1$ as $t \rightarrow \infty$. Therefore $ \log_{n_t} |S_{n_t}| \rightarrow 1$ as $t \rightarrow \infty$. In other words, $|S_{n_t}| \rightarrow n_t$ as $t \rightarrow \infty$. This is in contrary to the first result. Hence the second result holds.
\end{proof}

\begin{theorem}[Convergence] \label{the9}
	 Consider the objective function $f(\mbf{\theta})$ in problem (\ref{eq2}), given fixed $\mbf{d},\mbf{\beta},\alpha$. For Algorithm~\ref{alg:OPE-infer}, the followings hold
	 \begin{enumerate}
	    \item For any $\mbf{\theta} \in \overline{\Delta}_K$, $F_{t}\left(\mbf{\theta}\right)$ converges  to $f\left(\mbf{\theta}\right)$ as ${t} \rightarrow +\infty$,
	    \item $\mbf{\theta}_{t}$ converges to a local maximal/stationary point $\mbf{\theta}^*$ of $f$ at a rate of $\mathcal{O}(1 / t)$.
	 \end{enumerate}
\end{theorem}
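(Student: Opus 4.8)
My plan is to read OPE as a stochastic Frank--Wolfe scheme on the compact set $\overline{\Delta}_K$ and to exploit the explicit averaging structure of the update $\mbf{\theta}_{t+1} = \mbf{\theta}_t + (\mbf{e}_t - \mbf{\theta}_t)/t$. Write $f = g_1 + g_2$ with $g_1(\mbf{\theta}) = \sum_j d_j \log\sum_k \theta_k\beta_{kj}$ and $g_2(\mbf{\theta}) = (\alpha-1)\sum_k\log\theta_k$, and set $X_h = +1$ when $f_h = g_1$ is drawn and $X_h = -1$ when $f_h = g_2$. Then $F_t = f + \frac{S_t}{t}(g_1 - g_2)$ with $S_t = X_1 + \cdots + X_t$ a Rademacher sum exactly as in Lemma~\ref{lem01}. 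Part~1 is now immediate: for fixed $\mbf{\theta}$ the quantity $g_1(\mbf{\theta}) - g_2(\mbf{\theta})$ is a finite constant, and Lemma~\ref{lem01}(1) gives $S_t/t \to 0$, so $F_t(\mbf{\theta}) \to f(\mbf{\theta})$; Lemma~\ref{lem01}(2) upgrades this to the quantitative bound $|F_t(\mbf{\theta}) - f(\mbf{\theta})| \le t^{v-1}|g_1(\mbf{\theta}) - g_2(\mbf{\theta})|$ for $t \ge N_0$ with $v < 1$, which I reuse to control the gradient error in Part~2.

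For Part~2 I first record the regularity of $f$. Because every coordinate obeys $\theta_k \ge \epsilon > 0$ on $\overline{\Delta}_K$, each inner product $\sum_k \theta_k\beta_{kj}$ (for $j$ with $d_j>0$) is bounded away from $0$ and each $\log\theta_k$ has bounded second derivative, so $f$ is $L$-smooth on $\overline{\Delta}_K$ with $L = L(\epsilon,\mbf{d},\mbf{\beta},\alpha)$, is bounded above, and $\overline{\Delta}_K$ has finite diameter $D$; the same computation bounds $\|g_1'(\mbf{\theta}) - g_2'(\mbf{\theta})\|$ uniformly, so $\|F_t'(\mbf{\theta}_t) - f'(\mbf{\theta}_t)\| = \frac{|S_t|}{t}\|g_1' - g_2'\| = O(t^{v-1})$. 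An easy induction also exposes the key structure: $\mbf{\theta}_{t+1} = \frac{1}{t}\sum_{h=1}^t \mbf{e}_h$, i.e. the iterate is the running average of the selected vertices of $\overline{\Delta}_K$.

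Combining $L$-smoothness with the step $\mbf{\theta}_{t+1} - \mbf{\theta}_t = (\mbf{e}_t - \mbf{\theta}_t)/t$ and the optimality of $\mbf{e}_t$ for $\langle F_t'(\mbf{\theta}_t), \cdot\rangle$, I obtain the ascent inequality
\[
f(\mbf{\theta}_{t+1}) \ge f(\mbf{\theta}_t) + \tfrac{1}{t}\langle F_t'(\mbf{\theta}_t), \mbf{e}_t - \mbf{\theta}_t\rangle - \tfrac{1}{t}\|F_t'(\mbf{\theta}_t) - f'(\mbf{\theta}_t)\|\,D - \tfrac{LD^2}{2t^2},
\]
in which the Frank--Wolfe gap $\langle F_t'(\mbf{\theta}_t), \mbf{e}_t - \mbf{\theta}_t\rangle$ is nonnegative, the middle error term is $O(t^{v-2})$, and the last term is $O(t^{-2})$. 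Since $v < 1$ both error series are summable; telescoping and using that $f$ is bounded above forces $\sum_t \frac{1}{t}\langle F_t'(\mbf{\theta}_t), \mbf{e}_t - \mbf{\theta}_t\rangle < \infty$, so the gaps vanish along the sequence and every limit point of $(\mbf{\theta}_t)$ admits no feasible ascent direction, i.e. is a local maximal/stationary point $\mbf{\theta}^*$ of $f$. To promote this into $O(1/t)$ convergence I would use the averaging identity plus vertex stabilization: near a stationary point that is a vertex (the typical case, since for $\alpha<1$ the convex term $g_2$ drives optima to the boundary), $f'(\mbf{\theta}^*)$ has a strict maximal coordinate, and because $F_t'(\mbf{\theta}_t) \to f'(\mbf{\theta}^*)$ the $\arg\max$ defining $\mbf{e}_t$ locks onto it, say $\mbf{e}_t \equiv \mbf{e}^*$ for $t \ge T_0$; then $\mbf{\theta}^* = \mbf{e}^*$ and $\|\mbf{\theta}_{t+1} - \mbf{\theta}^*\| = \frac{1}{t}\|\sum_{h<T_0}(\mbf{e}_h - \mbf{e}^*)\| = O(1/t)$.

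The step I expect to be the main obstacle is precisely this last one: upgrading subsequential convergence to convergence of the whole sequence and establishing that the chosen vertex stabilizes. This forces one to rule out (or separately handle) degenerate stationary points where $f'(\mbf{\theta}^*)$ has tied maximal coordinates, e.g. interior stationary points where $f'(\mbf{\theta}^*) \propto \mbf{1}$ and the $\arg\max$ is ambiguous. Closing that gap — either by arguing the limit must sit at a vertex, or by a direct Cauchy estimate on the empirical vertex frequencies $n_k(t)/t$ that determine $\mbf{\theta}_t$ — is where the real work lies and is what ultimately pins down the $O(1/t)$ constant.
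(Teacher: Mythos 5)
Your plan follows the paper's own proof almost line by line: the same decomposition $f = g_1 + g_2$, the same one-to-one correspondence between the random picks $f_t$ and a Rademacher sum $S_t$ so that $F_t - f = \frac{S_t}{t}(g_1 - g_2)$ (giving Part~1 via Lemma~\ref{lem01}), and the same Lipschitz-smoothness ascent inequality whose telescoped sum bounds $\sum_t \frac{1}{t}\left\langle F'_t(\mbf{\theta}_t), \mbf{e}_t - \mbf{\theta}_t\right\rangle$ by a finite constant, forcing the Frank--Wolfe gaps to vanish and yielding stationarity of limit points. Your quantitative refinement $|S_t| \le t^{v}$ with $v<1$, feeding an $O(t^{v-2})$ summable error series, is exactly the paper's step~(\ref{eq13a-OPE}). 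So Part~1 and the stationarity portion of Part~2 are correct and essentially identical to the paper's argument.

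The gap you flag at the end is genuine --- and, notably, the paper does not close it either. The paper disposes of full-sequence convergence with the assertion that it is ``easy to see'' that $\mbf{\theta}_t$ converges to some $\mbf{\theta}^*$ at rate $O(1/t)$ ``due to the update'' $\mbf{\theta}_{t+1} = \mbf{\theta}_t + (\mbf{e}_t - \mbf{\theta}_t)/t$. That is not a proof: the step lengths $\|\mbf{\theta}_{t+1}-\mbf{\theta}_t\| = O(1/t)$ are not summable, and, as your averaging identity $\mbf{\theta}_{t+1} = \frac{1}{t}\sum_{h=1}^{t}\mbf{e}_h$ makes transparent, the iterates converge precisely when the empirical frequencies of the selected vertices converge, which requires an argument. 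Your vertex-stabilization sketch (the argmax locking onto a strictly maximal coordinate of $f'(\mbf{\theta}^*)$, whence $\mbf{e}_t \equiv \mbf{e}^* = \mbf{\theta}^*$ for $t \ge T_0$ and an $O(1/t)$ tail from the finitely many earlier vertices) is a plausible route in the nondegenerate vertex case and is strictly more than the paper offers, but the tied/degenerate cases you name remain open in both your write-up and the paper's. One further shared soft spot: summability of $\sum_t a_t/t$ with $a_t = \left\langle F'_t(\mbf{\theta}_t), \mbf{e}_t - \mbf{\theta}_t\right\rangle \ge 0$ yields only $\liminf_t a_t = 0$, not $a_t \to 0$; the paper's contrary reasoning (that infinitely many $a_t \ge c_3$ would force divergence) fails for sparse index sets such as $t = 2^k$, so your phrase ``the gaps vanish along the sequence'' should read ``along a subsequence'' unless you add control on the oscillation of $a_t$, e.g.\ via the $O(t^{-1})$ increments of $\mbf{\theta}_t$ and continuity of $f'$.
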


\begin{proof}
Denote $g_1 =  \sum_j d_j \log\sum_{k = 1}^K\theta_k\beta_{kj}$ and  $g_2 = (\alpha - 1)\sum_{k = 1}^K \log\theta_k$, we have $f = g_1 + g_2$. Let $a_{t}$ and $b_{t}$ be the number of times that we have already picked $g_1$ and $g_2$ respectively after $t$ iterations. 
Note that $a_t + b_t = t$. Therefore for any $\mbf{\theta} \in \overline{\Delta}_K$ we have 
\begin{eqnarray}
\label{eq3--OPE}
F_t &=& \frac{2}{t}(a_t g_1 + b_t g_2) \\
\label{eq4--OPE}
F_t - f &=& \frac{a_t - b_t}{t} (g_1 - g_2) = \frac{S_t}{t} (g_1 - g_2) \\
\label{eq5--OPE}
F'_t - f' &=& \frac{a_t - b_t}{t} (g'_1 - g'_2) = \frac{S_t}{t} (g'_1 - g'_2), 
\end{eqnarray}
where we have denoted $S_t = a_t - b_t$. 

For each iteration $t$ of OPE we have to pick uniformly randomly an $f_t$ from $\{g_1, g_2\}$. Now we make a correspondence between $f_t$ and a uniformly random variable $X_t$ on $\{1, -1\}$. This correspondence is a one-to-one mapping. So $S_t$ can be represented as $S_t = X_1 + \cdots + X_t$. Lemma \ref{lem01} shows that $S_t / t \rightarrow 0$ as $t \rightarrow \infty$. Combining this with (\ref{eq4--OPE}) we conclude that the sequence ${F_t}$ converges  to $f$. Also due to (\ref{eq5--OPE}), the   sequence $F'_t$ converges  to $f'$. The convergence holds for any $\mbf{\theta} \in \overline{\Delta}_K$. This proves the first statement of the theorem.

It is easy to see that the sequence $\{\mbf{\theta}_1, \mbf{\theta}_2, ...\}$ converges  to a point $\mbf{\theta}^* \in \overline{\Delta}_K$ at the rate $\mathcal{O}(1/t)$, due to the update of $\boldsymbol{\theta}_{t +1} := \boldsymbol{\theta}_{t} + (\boldsymbol{e}_{t} - \boldsymbol{\theta}_{t}) /t$. We next show that $\mbf{\theta}^*$ is a local maximal/stationary point of $f$. 

Consider 
\begin{eqnarray}
\label{eq7--OPE}
\left\langle F'_t(\mbf{\theta}_t) , \frac{\mbf{e}_{t}- \mbf{\theta}_t}{t} \right\rangle
&=& \left\langle F'_t(\mbf{\theta}_t) - f'(\mbf{\theta}_t), \frac{\mbf{e}_{t}- \mbf{\theta}_t}{t} \right\rangle + \left\langle f'(\mbf{\theta}_t), \frac{\mbf{e}_{t}- \mbf{\theta}_t}{t} \right\rangle \\
\label{eq8--OPE}
&=& \left\langle \frac{S_t}{t}(g'_1(\mbf{\theta}_t) - g'_2(\mbf{\theta}_t)), \frac{\mbf{e}_{t}- \mbf{\theta}_t}{t} \right\rangle + \left\langle f'(\mbf{\theta}_t), \frac{\mbf{e}_{t}- \mbf{\theta}_t}{t} \right\rangle \\
\label{eq9--OPE}
&=& \frac{S_t}{t^2} \left\langle g'_1(\mbf{\theta}_t) - g'_2(\mbf{\theta}_t), {\mbf{e}_{t}- \mbf{\theta}_t} \right\rangle + \left\langle f'(\mbf{\theta}_t), \frac{\mbf{e}_{t}- \mbf{\theta}_t}{t} \right\rangle 
\end{eqnarray}
Note that $g_1, g_2$ are Lipschitz continuous on $\overline{\Delta}_K$. Hence there exists a constant $L$ such that 
\begin{eqnarray}
\left< f'(z), y - z \right> &\le& f(y) - f(z) + L || y- z ||^2, \forall z, y \in \overline{\Delta}_K
\end{eqnarray}
Exploiting this to (\ref{eq9--OPE}) we obtain
\begin{eqnarray}
\nonumber
\left\langle F'_t(\mbf{\theta}_t) , \frac{\mbf{e}_t- \mbf{\theta}_t}{t} \right\rangle 
&\le& \frac{S_t}{t^2} \left\langle g'_1(\mbf{\theta}_t) - g'_2(\mbf{\theta}_t), {\mbf{e}_{t}- \mbf{\theta}_t} \right\rangle + f(\mbf{\theta}_{t+1}) - f(\mbf{\theta}_{t}) + \frac{L}{t^2} ||\mbf{e}_{t}- \mbf{\theta}_t ||^2.
\end{eqnarray}

Since $\mbf{e}_{t}$ and $\mbf{\theta}_t$ belong to $\Delta_K$, the quantity $| \left\langle g'_1(\mbf{\theta}_t) - g'_2(\mbf{\theta}_t), {\mbf{e}_{t}- \mbf{\theta}_t} \right\rangle |$ is bounded above for any $t$. Therefore, there exists a constant $c_2>0$ such that
\begin{eqnarray}
\label{eq11--OPE}
\left\langle F'_t(\mbf{\theta}_t) , \frac{\mbf{e}_t- \mbf{\theta}_t}{t} \right\rangle 
&\le& \frac{c_2 | S_t |}{t^2} + f(\mbf{\theta}_{t+1}) - f(\mbf{\theta}_{t}) + \frac{c_2 L}{t^2}.
\end{eqnarray}
Summing both sides of (\ref{eq11--OPE}) for all $t$ we have
\begin{eqnarray}
\label{eq12--OPE}
\sum_{h=1}^{t} \frac{1}{h} \left\langle F'_h(\mbf{\theta}_h) , \mbf{e}_h- \mbf{\theta}_h \right\rangle 
&\le& \sum_{h=1}^{t} \frac{c_2 | S_h |}{h^2} + f(\mbf{\theta}_{t+1}) - f(\mbf{\theta}_{1}) + \sum_{h=1}^{t}\frac{c_2 L}{h^2}.
\end{eqnarray}
As $t \rightarrow +\infty$ we note that $f(\mbf{\theta}_t) \rightarrow f(\mbf{\theta}^*)$ due to the continuity of $f$. As a result, inequality (\ref{eq12--OPE}) implies 
\begin{eqnarray}
\label{eq13--OPE}
\sum_{h=1}^{+\infty} \frac{1}{h} \left\langle F'_h(\mbf{\theta}_h) , \mbf{e}_h- \mbf{\theta}_h \right\rangle 
&\le& \sum_{h=1}^{+\infty} \frac{c_2 | S_h |}{h^2} + f(\mbf{\theta}^*) - f(\mbf{\theta}_{1}) + \sum_{h=1}^{+\infty}\frac{c_2 L}{h^2}.
\end{eqnarray}

According to Lemma \ref{lem01},  there exist constants $v \in [0, 1)$ and $T_0 > 1$ such that $\forall t \geq T_0, |S_t| \leq t^v$. Therefore  
\begin{eqnarray}
\label{eq13a-OPE}
\sum_{h=1}^{+\infty} \frac{1}{h} \left\langle F'_h(\mbf{\theta}_h) , \mbf{e}_h- \mbf{\theta}_h \right\rangle  &\le& c_2 \sum_{h=1}^{T_0}  \frac{ | S_h |}{h^2} + c_2\sum_{h=T_0+1}^{+\infty} \frac{ h^{v}}{h^2} + f(\mbf{\theta}^*) - f(\mbf{\theta}_{1}) + \sum_{h=1}^{+\infty}\frac{c_2 L}{h^2}.
\end{eqnarray}
Note that the series $\sum_{h=T_0+1}^{+\infty} {h^{v} / h^2}$ converges due to $v \in [0, 1)$, and $\sum_{h=1}^{T_0} {| S_h |}/{h^2}$ is bounded. Further, $\sum_{h=1}^{+\infty} {L}/{h^2} < \infty$. Hence, the right-hand side of (\ref{eq13a-OPE}) is finite. In addition, $\left\langle F'_h(\mbf{\theta}_h) , \mbf{e}_h \right\rangle > \left\langle F'_h(\mbf{\theta}_h) , \mbf{\theta}_h \right\rangle$ for any $h>0$ because of $\mbf{e}_{h} = \arg \max_{\mbf{x} \in \overline{\Delta}_K} \left\langle F'_h(\mbf{\theta}_h) , \mbf{x} \right\rangle$. Therefore we obtain the following
\begin{eqnarray}
\label{eq14--OPE}
0 \le \sum_{h=1}^{+\infty} \frac{1}{h} \left\langle F'_h(\mbf{\theta}_h) , \mbf{e}_h- \mbf{\theta}_h \right\rangle 
&<& \infty.
\end{eqnarray}
In other words, the series $\sum_{h=1}^{+\infty} \frac{1}{h} \left\langle F'_h(\mbf{\theta}_h) , \mbf{e}_h- \mbf{\theta}_h \right\rangle$ converges to a finite constant.

Note that $0 \le \left\langle F'_h(\mbf{\theta}_h) , \mbf{e}_h- \mbf{\theta}_h \right\rangle$ for any $h$. If there exists constant $c_3 >0$ satisfying  $\left\langle F'_h(\mbf{\theta}_h) , \mbf{e}_h- \mbf{\theta}_h \right\rangle \ge c_3$ for an infinite number of $h$'s, then the series $\sum_{h=1}^{+\infty} \frac{1}{h} \left\langle F'_h(\mbf{\theta}_h) , \mbf{e}_h- \mbf{\theta}_h \right\rangle$ could not converge to a finite constant, which is in contrary to (\ref{eq14--OPE}). Therefore, 

\begin{equation}
\label{eq15--OPE}
\left\langle F'_h(\mbf{\theta}_h) , \mbf{e}_h- \mbf{\theta}_h \right\rangle \rightarrow  0 \text{ as } h \rightarrow +\infty.
\end{equation}

In one case, there exists a large $H$ such that $|\mbf{e}_h- \mbf{\theta}_h| \rightarrow 0$ for any $h \ge H$. This suggests one of the vertex of $\overline{\Delta}_K$ is a local maximal point of $f$, which proves the theorem. On the other case, assume that the sequence $\mbf{e}_h- \mbf{\theta}_h$ diverges or converges to a nonzero constant $c_4$. Then (\ref{eq15--OPE}) holds if and only if $F'_h(\mbf{\theta}_h)$  goes to 0 as $h \rightarrow +\infty$. Since $\mbf{\theta}_h \rightarrow \mbf{\theta}^*$, we have 
\begin{equation}
\lim\limits_{h \rightarrow +\infty} || F'_h(\mbf{\theta}_h) || = \lim\limits_{h \rightarrow +\infty} || f'(\mbf{\theta}_h) || = || f'(\mbf{\theta}^*) ||= 0.
\end{equation}
In other words, $\mbf{\theta}^*$ is a stationary point of $f$.

\end{proof}

\subsection{Comparison with existing inference methods}

\begin{table*}[tp]
\caption{Theoretical comparison of 5 inference methods, given a document $\mbf{d}$ and model $\mathcal{M}$ with $K$ topics. MAP denotes maximum a posterior, ELBO denotes maximizing an evidence lower bound on the likelihood. $T$ denotes  the number of iterations. $n_d$ and $\ell_d$ respectively are the number of different terms and number of tokens in $\mbf{d}$.  `-' denotes \emph{`unknown'}. Note that $n_d \le \ell_d$.}
\begin{center}
\begin{scriptsize}
\begin{tabular}{llllll}
\hline
Method & OPE & VB & CVB & CVB0 & CGS \\
\hline
Posterior probability of interest & $\Pr(\mbf{\theta, d} | \mathcal{M})$ & $\Pr(\mbf{\theta, z, d} | \mathcal{M})$ & $\Pr(\mbf{z, d} | \mathcal{M})$ & $\Pr(\mbf{z, d} | \mathcal{M})$ & $\Pr(\mbf{z, d} | \mathcal{M})$ \\
Approach & MAP & ELBO & ELBO & ELBO & Sampling \\
Quality bound & Yes & -& -& -& - \\
Convergence rate & $O(1/T)$ & - & -& -& - \\
Iteration complexity & $O(K. n_d)$ & $O(K. n_d)$ & $O(K. \ell_d)$ & $O(K. \ell_d)$ & $O(K. \ell_d)$ \\
Storage & $O(K)$ & $O(K. n_d)$ & $O(K. \ell_d)$ & $O(K. \ell_d)$ & $O(K. \ell_d)$ \\
$Digamma$ evaluations & 0 & $O(K.n_d)$ & 0 & 0 & $O(K.n_d)$ \\
$Exp$ or $Log$ evaluations  & $O(K.n_d)$ & $O(K.n_d)$ & $O(K. \ell_d)$ & 0 & $O(K.n_d)$ \\
Modification on global variables & No & No & Yes & Yes & No \\
\hline
\end{tabular}
\end{scriptsize}
\end{center}
\label{table 1: theoretical comparison}
\end{table*}

Comparing with other inference approaches (including VB, CVB, CVB0 and CGS), our algorithm has many preferable properties as summarized in Table \ref{table 1: theoretical comparison}. \footnote{A detailed analysis of VB, CVB, CVB0 and CGS can be found in \citep{ThanH15sparsity}.}

\begin{itemize}
\item[-]  OPE has explicitly theoretical guarantees on quality and fast convergence rate. This is the most notable property of OPE, for which existing inference methods often do not have.
\item[-] Its rate of convergence surpasses the best rate of existing stochastic algorithms for non-convex problems \citep{Ghadimi2013stochasticNonconvex,Mairal2013stochasticNonconvex}. Note that OPE can be easily modified to solve more general non-convex problems. Therefore, it is applicable to a wide range of contexts.
\item[-] OPE requires a very modest memory of $O(K)$ for storing temporary solutions and gradients, which is significantly more efficient than VB, CVB, CVB0, and CGS.
\item[-] Each iteration of OPE requires $O(Kn_d)$ computations for computing gradients and updating solutions. This is much more efficient than VB, CVB, CVB0, and CGS in practice.
\item[-] Unlike CVB and CVB0, OPE does not change the global variables when doing inference for individual documents. Hence OPE embarrasingly enables parallel inference, and is more beneficial than CVB and CVB0. 
\end{itemize}

\subsection{Extension to MAP estimation and non-convex optimization}

It is worth realizing that the employment of OPE for other contexts is straightforward. The main step of using OPE is to formulate the problem of interest to be maximization of a function  of the form $f(x) = g_1(x) + g_2(x)$. In the followings, we demonstrate this main step in two  problems which appear in a wide range of contexts.

\emph{The MAP estimation problem} in many probabilistic models is the task of finding an 
\[x^* = \arg \max_x \Pr(x | D) = \arg \max_x \Pr(D | x) \Pr(x)/ \Pr(D),\] 
where $\Pr(D | x)$ denotes the likelihood of an observed variable $D$, $\Pr(x)$ denotes the prior of the hidden variable $x$, and $\Pr(D)$ denotes the marginal probability of $D$. Note that 
\[x^* =\arg \max_x \Pr(D | x) \Pr(x) = \arg \max_x [\log \Pr(D | x) + \log \Pr(x)].\]
If the densities of the distributions over $x$ and $D$ can be described by some analytic functions,\footnote{The exponential family of  distributions is an example.} then the MAP estimation problem turns out to be maximization of $f(x) = g_1(x) + g_2(x)$, where $g_1(x) =\log \Pr(D | x), g_2(x) =\log \Pr(x)$. 

Now consider a general \emph{non-convex optimization} problem $x^* =\arg \max_x f(x)$. Theorem~1 by \cite{Yuille2003cccp} shows that we can always decompose $f(x)$ into the sum of a convex function and a concave function, provided that $f(x)$ has bounded Hessian. This is one way to decompose $f$ into the sum of two functions. We can use many other ways to make a decomposition of $f = g_1 + g_2$ and then employ OPE, because the convergence proof of OPE does not require $g_1$ and $g_2$ to be concave/convex.

The analysis above demonstrates that OPE can be easily employed in a wide range of contexts, including posterior estimation and non-convex optimization. One may  need to suitably modify the domain of $\mbf{\theta}$, and hence the step of finding $\mbf{e}_t$ will be a linear program which can be solved efficiently. Comparing with non-linear steps in CCCP \citep{Yuille2003cccp} and SMM \citep{Mairal2013stochasticNonconvex}, OPE could be much more efficient. In this paper, we do not try to make a rigorous investigation of OPE in those contexts, and leave it open for future research.

\section{Stochastic algorithms for learning LDA} \label{sec-Dolda}

We have seen many attractive properties of OPE that other methods do not have. We further show in this section the  simplicity of using OPE for designing fast learning algorithms for topic models. More specifically, we design 3  algorithms: \emph{Online-OPE} which learns LDA from large corpora in an online fashion, \emph{Streaming-OPE} which learns LDA from data streams, and \emph{ML-OPE} which enables us to learn LDA from either large corpora or data streams. These algorithms employ OPE to do MAP inference for individual documents, and the online scheme \citep{Bottou1998stochastic,Hoffman2013SVI} or streaming scheme \citep{Broderick2013streaming} to infer  global variables (topics). Hence, the stochastic nature appears in both local and global inference phases. Note that the MAP inference of local variables by OPE has theoretical guarantees on quality and convergence rate. Such a property might help the new large-scale learning algorithms be more attractive than existing ones, which base on VB, CVB, CVB0, and CGS.

\subsection{Regularized online learning}

Given a corpus $\mathcal{C}$ (with finite or infinite number of documents) and $\alpha > 0$, we will estimate the topics $\mbf{\beta}_1, ...,\mbf{\beta}_K$ that maximize
\begin{equation} \label{eq13}
\begin{split}
\mathcal{L}(\mbf{\beta}) &= \sum_{\mbf{d} \in \mathcal{C}} \log \Pr \left(\mbf{\theta}_d, \mbf{d} | \mbf{\beta},\alpha\right)\\
&= \sum_{\mbf{d} \in \mathcal{C}} \left(\sum_j d_j \log\sum_{k = 1}^K\theta_{dk}\beta_{kj} + (\alpha - 1)\sum_{k = 1}^K \log\theta_{dk}\right) + constant.
\end{split}
\end{equation}

To solve this problem, we use the online learning scheme by \cite{Bottou1998stochastic}. More specifically, we repeat the following steps:
\emph{\begin{itemize}
	\item[-] Sample a subset $\mathcal{C}_t$ of documents from $\mathcal{C}$. Infer the local variables ($\mbf{\theta}_d$) for each document $\mbf{d} \in \mathcal{C}_t$, given the global variable $\mbf{\beta}^{t - 1}$ in the last step.
	\item[-] Form an intermediate global variable $\hat{\mbf{\beta}}^t$ for $\mathcal{C}_t$.
	\item[-] Update the global variable to be a weighted average of $\hat{\mbf{\beta}}^t$ and $\mbf{\beta}^{t - 1}$.
\end{itemize}}
Details of this learning algorithm is presented in Algorithm \ref{alg:ML-OPE}, where we have used  the same arguments as \cite{ThanH2012fstm} to  update the intermediate topics $\hat{\mbf{\beta}}^t$ from $\mathcal{C}_t$:
\begin{equation} \label{eq14}
\hat{\beta}^t_{kj} \propto \sum_{\mbf{d} \in \mathcal{C}_t} d_j \theta_{dk}.
\end{equation}

Note that in Algorithm \ref{alg:ML-OPE} the step-size  $\rho_t = \left(t + \tau\right)^{-\kappa}$  satisfies two conditions: $\sum_{t = 1}^{\infty} \rho_t = \infty$ and $\sum_{t = 1}^{\infty} \rho_t^2 < \infty$. These conditions are to assure that the learning algorithm will converge to a stationary point \citep{Bottou1998stochastic}. $\kappa \in (0.5, 1]$ is the forgeting rate, the higher the lesser the algorithm weighs the role of new data. 

\begin{algorithm}[tp]
	\caption{\textsf{ML-OPE} for learning LDA from massive/streaming data}
	\label{alg:ML-OPE}
	\begin{algorithmic}
		\STATE {\bfseries Input: } data sequence, $K, \alpha, \tau > 0, \kappa \in (0.5, 1]$  
		\STATE {\bfseries Output: } $\mbf{\beta}$		
		\STATE Initialize $\mbf{\beta}^0$ randomly in $\Delta_V$
		\FOR{ $t = 1, ..., \infty$}
		\STATE Pick a set $\mathcal{C}_t$ of documents
		\STATE Do inference by OPE for each $\mbf{d} \in \mathcal{C}_t$  to get  $\mbf{\theta}_d$, given $\mbf{\beta}^{t-1}$
		\STATE Compute intermediate topics $\hat{\mbf{\beta}}^t$ as: 
		\begin{equation}
			\label{eq-ML-OPE-1}
			\hat{\beta}_{kj}^t  \propto  \sum_{\mbf{d} \in \mathcal{C}_t} d_j \theta_{dk}
		\end{equation}
		\STATE Set step-size: $\rho_t = \left(t + \tau\right)^{-\kappa}$
		\STATE Update topics: $\mbf{\beta}^t := \left(1 - \rho_t\right)\mbf{\beta}^{t-1} + \rho_t\hat{\mbf{\beta}}^t$
		\ENDFOR 
	\end{algorithmic}
\end{algorithm}

It is worth discussing some fundamental differences between ML-OPE and existing online/streaming methods. First, we need not to know a priori how many documents to be processed. Hence, ML-OPE can deal well with streaming/online environments in a realistical way. Second, ML-OPE learns topics ($\mbf{\beta}$) directly instead of learning the parameter ($\mbf{\lambda}$) of a variarional distribution over topics as in SVI \citep{Hoffman2013SVI}, in SSU \citep{Broderick2013streaming}, and in the hybrid method by \cite{MimnoHB12}. While $\mbf{\beta}$ are regularized to be in the unit simplex $\Delta_V$, $\mbf{\lambda}$ can grow arbitrarily. The uncontrolled growth of $\mbf{\lambda}$ might potentially cause overfitting as sufficiently many documents are processed. In contrast, the regularization on $\mbf{\beta}$ helps ML-OPE to avoid overfitting and generalize better.

\subsection{Online and streaming learning}
In the existing literature of topic modeling, most methods for posterior inference try to estimate $\Pr(\mbf{\theta, z, d} | \mathcal{M})$ or $\Pr(\mbf{z, d} | \mathcal{M})$ given a model $\mathcal{M}$ and document $\mbf{d}$. Therefore, existing large-scale learning algorithms for topic models often base on those probabilities. Some examples include SVI \citep{Hoffman2013SVI}, SSU \citep{Broderick2013streaming}, SCVB0 \citep{Foulds2013stochastic,SatoN2015SCVB0}, Hybrid sampling and SVI \citep{MimnoHB12}, Population-VB \citep{McInerney2015population}.

Different with other approaches, OPE directly infers $\mbf{\theta}$ by maximizing the joint probability $\Pr(\mbf{\theta, d} | \mathcal{M})$. Following the same arguments with \cite{ThanH15sparsity}, one can easily exploit OPE to design new online and streaming algorithms for learning LDA. Online-OPE in Algorithm~\ref{alg:-online-OPE} and Streaming-OPE in Algorithm~\ref{alg:-stream-OPE} are two exploitations of OPE. It is worth  noting that Online-OPE and Streaming-OPE are hybrid combinations of OPE with SVI, which are similar in manner to the algorithm by \cite{MimnoHB12}. Further, Streaming-OPE and ML-OPE do not need to know a priori the number of documents to be processed in the future, and hence are suitable to work in a real streaming environment.

\begin{algorithm}[tp]
   \caption{\textsf{Online-OPE} for learning LDA from massive data}
   \label{alg:-online-OPE}
\begin{algorithmic}
   \STATE {\bfseries Input: } training data $\mathcal{C}$ with $D$ documents, $K, \alpha, \eta, \tau > 0, \kappa \in (0.5, 1]$  
   \STATE {\bfseries Output:} $\mbf{\lambda}$
   \STATE Initialize $\mbf{\lambda}^0$ randomly
   \FOR{ $t = 1, ..., \infty$}
	   \STATE Sample a set $\mathcal{C}_t$ consisting of $S$ documents. 
	   \STATE Use Algorithm~\ref{alg:OPE-infer} to do posterior inference for each document $\mbf{d} \in \mathcal{C}_t$, given the global variable $\mbf{\beta}^{t-1} \propto \mbf{\lambda}^{t - 1}$ in the last step, to get topic mixture $\mbf{\theta}_d$. Then compute $\mbf{\phi}_d$ as 
		   \begin{equation} 
		   \phi_{djk} \propto \theta_{dk}  \beta_{kj}.
		   \end{equation}
	   \STATE For each $k \in \{1, 2, ..., K\}$, form an intermediate global variable $\hat{\mbf{\lambda}}_k$ for $\mathcal{C}_t$ by
	   		\begin{equation}
	   		  \hat{\lambda}_{kj} = \eta + \frac{D}{S} \sum_{\mbf{d} \in \mathcal{C}_t} d_j \phi_{djk}.
	   		\end{equation}
	   	\STATE Update the global variable by, where $\rho_t = (t + \tau)^{-\kappa}$, 
   			\begin{equation}
   			  \mbf{\lambda}^{t} := (1-\rho_t) \mbf{\lambda}^{t-1} + \rho_t \hat{\mbf{\lambda}}.
   			\end{equation}
   	\ENDFOR
\end{algorithmic}
\end{algorithm}

\begin{algorithm}[tp]
   \caption{\textsf{Streaming-OPE} for learning LDA from massive/streaming data}
   \label{alg:-stream-OPE}
\begin{algorithmic}
   \STATE {\bfseries Input: } data sequence, $K, \alpha$
   \STATE {\bfseries Output:} $\mbf{\lambda}$
   \STATE Initialize $\mbf{\lambda}^0$ randomly
   \FOR{ $t = 1, ..., \infty$}
	   \STATE Sample a set $\mathcal{C}_t$ of documents. 
	   \STATE Use Algorithm~\ref{alg:OPE-infer} to do posterior inference for each document $\mbf{d} \in \mathcal{C}_t$, given the global variable $\mbf{\beta}^{t-1} \propto \mbf{\lambda}^{t - 1}$ in the last step, to get topic mixture $\mbf{\theta}_d$. Then compute $\mbf{\phi}_d$ as 
		   \begin{equation} 
		   \phi_{djk} \propto \theta_{dk}  \beta_{kj}.
		   \end{equation}
	   \STATE For each $k \in \{1, 2, ..., K\}$, compute  sufficient statistics $\hat{\mbf{\lambda}}_k$ for $\mathcal{C}_t$ by
	   		\begin{equation}
	   		  \hat{\lambda}_{kj} = \sum_{\mbf{d} \in \mathcal{C}_t}  d_j \phi_{djk}.
	   		\end{equation}
	   	\STATE Update the global variable by
   			\begin{equation}
   			  \mbf{\lambda}^{t} := \mbf{\lambda}^{t-1} +  \hat{\mbf{\lambda}}.
   			\end{equation}
   	\ENDFOR
\end{algorithmic}
\end{algorithm}

\section{Empirical evaluation} \label{sec-Evaluation}

This section is devoted to investigating  practical behaviors of OPE,  and how useful it is when OPE is employed to design new algorithms for learning topic models at large scales. To this end, we take the following  methods, datasets, and performance measures into investigation.

\textsc{Inference methods:}
\begin{itemize}
\item[-] \emph{Online MAP estimation} (OPE).
\item[-] \emph{Variational Bayes} (VB) \citep{BNJ03}.
\item[-] \emph{Collapsed variational Bayes} (CVB0) \citep{Asuncion+2009smoothing}. 
\item[-] \emph{Collapsed Gibbs sampling} (CGS) \citep{MimnoHB12}.
\end{itemize}

CVB0 and CGS have been observing to work best by several previous studies \citep{Asuncion+2009smoothing,MimnoHB12,Foulds2013stochastic,GaoSWYZ15,SatoN2015SCVB0}. Therefore they can be considered as the state-of-the-art inference methods.

\textsc{Large-scale learning methods:}
\begin{itemize}
\item[-] Our new algorithms:   \emph{ML-OPE}, \emph{Online-OPE},  \emph{Streaming-OPE}
\item[-] \emph{Online-CGS} by \cite{MimnoHB12}
\item[-] \emph{Online-CVB0} by \cite{Foulds2013stochastic}
\item[-] \emph{Online-VB} by \cite{Hoffman2013SVI}, which is often known as SVI
\item[-] \emph{Streaming-VB} by \cite{Broderick2013streaming} with original name to be SSU
\end{itemize}

Online-CGS \citep{MimnoHB12} is a hybrid algorithm, for which CGS is used to estimate the distribution of local variables ($\mbf{z}$) in a document, and VB is used to estimate the distribution of global variables ($\mbf{\lambda}$). Online-CVB0 \citep{Foulds2013stochastic} is an online version of the batch algorithm by \cite{Asuncion+2009smoothing}, where local inference for a document is done by CVB0. Online-VB \citep{Hoffman2013SVI} and Streaming-VB \citep{Broderick2013streaming} are two stochastic algorithms for which local inference for a document is done by VB. To avoid possible bias in our investigation, we wrote 6 methods by Python in a unified framework with our best efforts, and Online-VB was taken from \url{http://www.cs.princeton.edu/~blei/downloads/onlineldavb.tar}.

\textsc{Data for experiments:} The following three large corpora were used in our experiments. \textit{Pubmed} consists of 8.2 millions of medical articles from the pubmed central; \textit{New York Times} consists of 300,000 news;\footnote{
The data were retrieved from \url{http://archive.ics.uci.edu/ml/datasets/}} and \emph{Tweet} consists of nearly 1.5 millions tweets.\footnote{We crawled tweets from Twitter (\url{http://twitter.com/}) with 69 hashtags containing various kinds of topics. Each document is the text content of a tweet. Then all tweets went through a preprocessing procedure including tokenizing, stemming, removing stopwords, removing low-frequency words (appear in less than 3 documents), and removing extremely short tweets (less than 3 words). Details of this dataset can be found at \cite{Mai2016hdp}.} The vocabulary size ($V$) of New York Times and Pubmed is more than 110,000, while that of Tweet is more than 89,000. It is worth noting that the first two datasets contain long documents, while Tweet contains very short tweets. The shortness of texts poses various difficulties \citep{Tang2014understandingLDA,Arora+2016infer,Mai2016hdp}. Therefore the usage of both long and short texts in our investigation would show more insights into performance of different methods. For each corpus we set aside randomly 1000 documents for testing, and used the remaining for learning.

\textsc{Parameter settings:} 
\begin{itemize}
\item[-] \emph{Model parameters:} $K=100, \alpha = 1/K, \eta = 1/K$. Such a choice of $(\alpha, \eta)$ has been observed to work well in many previous studies \citep{GriffithsS2004,Hoffman2013SVI,Broderick2013streaming,Foulds2013stochastic}.
\item[-] \emph{Inference parameters:} at most 50 iterations were allowed  for OPE and VB to do inference. We terminated VB if the relative improvement of the lower bound on likelihood is not better than $10^{-4}$. 50 samples were used in CGS for which the first 25 were discarded and the remaining were used to approximate the posterior distribution. 50 iterations were used to do inference in CVB0, in which the first 25 iterations were burned in. Those number of samples/iterations are often enough to get a good inference solution, according to \cite{MimnoHB12,Foulds2013stochastic}. 
\item[-] \emph{Learning parameters:} minibatch size $S = | \mathcal{C}_t | =5000$, $\kappa = 0.9, \tau=1$. This choice of learning parameters has been found to result in competitive performance of Online-VB \citep{Hoffman2013SVI} and Online-CVB0 \citep{Foulds2013stochastic}. Therefore it was used in our investigation to avoid possible bias. We used default values for some other parameters in Online-CVB0.
\end{itemize}

\textsc{Performance measures:} We used \textit{NPMI} and \textit{Predictive Probability} to evaluate the learning methods. NPMI \citep{Lau2014npmi} measures the semantic quality of individual topics. From extensive experiments, \cite{Lau2014npmi} found that NPMI agrees well with human evaluation on the interpretability of topic models. Predictive probability \citep{Hoffman2013SVI} measures the predictiveness and generalization of a model  to new data. Detailed descriptions of these measures are presented in Appendix \ref{appendix--perp}.

\subsection{Performance of learning methods}

We first investigate the performance of the learning methods when spending more time on learning from data. Figure~\ref{fig-OPE-perp-npmi-time} shows how good they are. We observe that OPE-based methods and Online-CGS are among the fastest methods, while Online-CVB0, Online-VB and Streaming-VB performed very slowly. Remember from Table \ref{table 1: theoretical comparison} that VB requires various evaluations of expensive functions (e.g., digamma, exp, log), while CVB0 needs to update a large number of statistics which associate with each token in a document. That is the reason for the slow performance of Online-CVB0 and VB-based methods. In contrast, each iteration of OPE and CGS is very efficient. Further, OPE can converge very fast to a good approximate solution of the inference problem. Those reasons explain why OPE-based methods and Online-CGS learned significantly more efficiently than the others.

\begin{figure}[!ht]
\centering
\includegraphics[width=\textwidth]{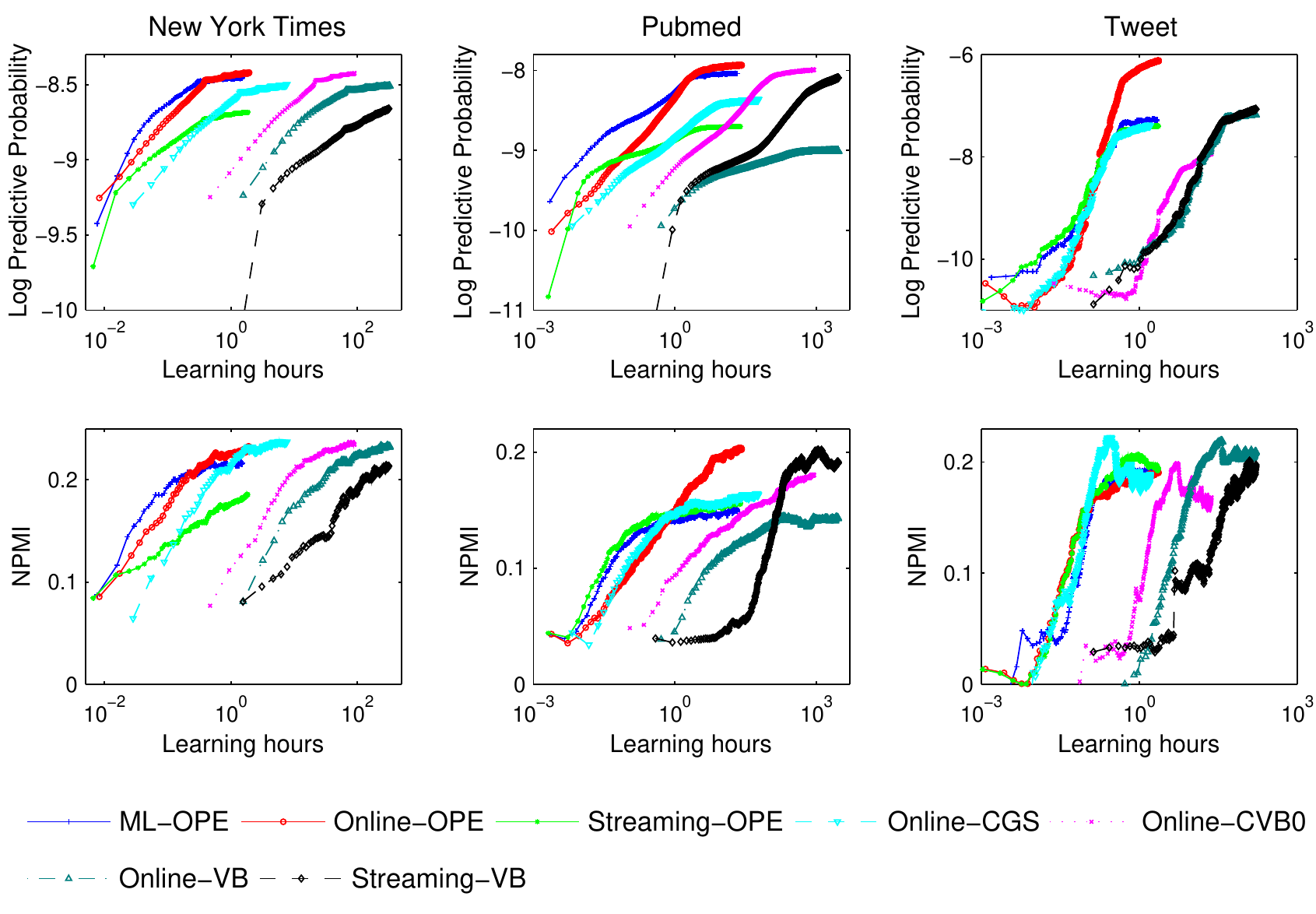}
\caption{Predictiveness (log predictive probability) and semantic quality (NPMI) of the models learned by different methods as spending more time on learning. Higher is better. Note that ML-OPE and Online-OPE often reach state-of-the-art performance. To reach the same predictiveness level, ML-OPE and Online-OPE work many times faster than Online-CGS, hundred times faster than Online-CVB0, and thousand times faster than both Online-VB and Streaming-VB.}
\label{fig-OPE-perp-npmi-time}
\end{figure} 

In terms of predictiveness, Streaming-OPE seems to perform worst, while 6 other methods often perform well. It is worth observing that while ML-OPE consistently reached state-of-the-art performance, Online-OPE surpassed all other methods for three datasets. Online-OPE even outperformed the others with a significant margin in Tweet, despite that such a dataset contains very short documents. ML-OPE and Online-OPE can quickly reach to a high predictiveness level, while Online-VB, Online-CGS, Online-CVB0, and Streaming-VB need substantially more time. Online-VB, Online-CGS, Online-CVB0, and Streaming-VB can perform considerably well as provided long time for leanring. Note that those four methods did not consistently perform well. For example, Online-VB worked on Pubmed worse than other methods, Online-CVB0 performed on Tweet worse than the others. The consistent superior performance of Online-OPE and ML-OPE might be due to the fact that the inferred solutions by OPE are provably good as guaranteed in Theorem \ref{the9}. However, the goodness of OPE seems not to be inherited well in Streaming-OPE.

In terms of semantic quality measured by NPMI, Figure \ref{fig-OPE-perp-npmi-time} shows the results from our experiments. It is easy to observe that Online-OPE often learns models with a good semantic quality, and is often among the top performers. Online-OPE and Online-CGS can return models with a very high quality after a short learning time. In contrast, Streaming-OPE and Online-CVB0 were among the worst methods. Interestingly, Streaming-VB often reach state-of-the-art performance when provided long time for learning, although many of its initial steps are quite bad. Comparing with predictiveness, NMPI did not consistently increase as the learning methods were allowed more time to learn. NPMI from long texts seems to be more stable than that for short texts, suggeting that learning from long texts often gets more coherent models than learning from short texts. In our experience with short texts, most methods often return models with many incoherent and noisy topics for some initial steps. 

\begin{figure}[!ht]
\centering
\includegraphics[width=\textwidth]{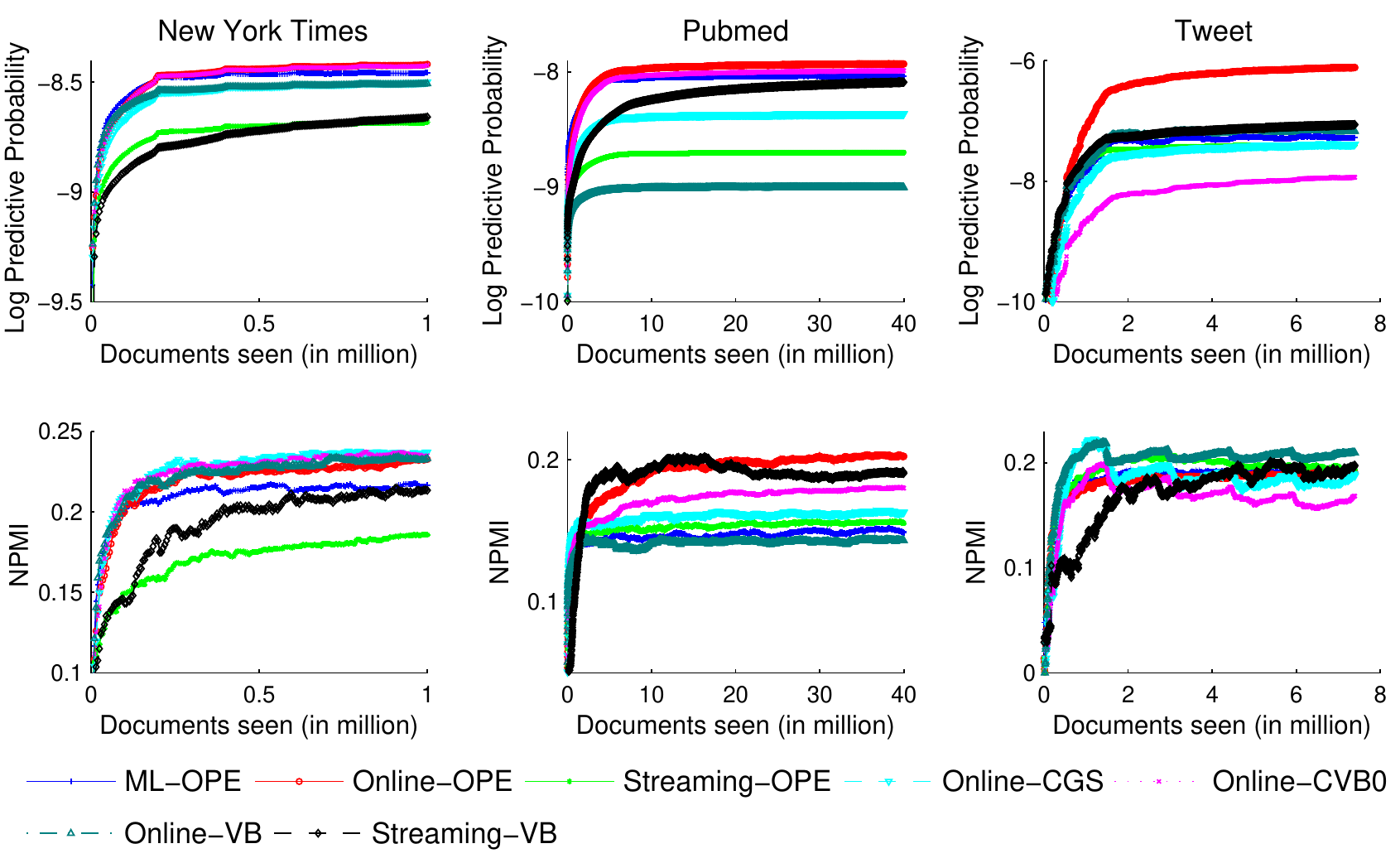}
\caption{Performance of  different learning methods as seeing more documents. Higher is better. Online-OPE often surpasses all other methods, while ML-OPE performes comparably with existing methods in terms of predictiveness.}
\label{fig-OPE-perp-npmi}
\end{figure}

Figure \ref{fig-OPE-perp-npmi} shows another perspective of  performance, where 7 learning methods were fed more documents for learning and were allowed to pass a dataset many times. We oberve that most methods can reach to a high predictiveness and semantic quality after reading some tens of thousand documents. All methods improve their predictiveness as learning from more documents. The first pass over a dataset often helps the learning methods to increase predictiveness drastically. However, more passes over a dataset are able to help improve predictiveness slightly. We find that Streaming-VB often needs many passes over a dataset in order to reach comparable predictiveness. It is easy to see that Online-OPE surpassed all other methods as learning from more documents, while ML-OPE performed comparably. Though being fed more documents in Tweet, existing methods were very hard to reach the same performance of Online-OPE in terms of predictiveness.

Those investigations suggest that ML-OPE and Online-OPE can perform comparably or even significantly better than existing state-of-the-art methods for learning LDA at large scales. This further demonstrates another benefit of OPE for topic modeling.

\textit{A sensitivity analysis:} We have seen an impressive performance of some OPE-based methods. We find that ML-OPE and Online-OPE can be potentially useful in practice of large-scale modeling. We next help them more usable by analyizing sensitivity with their parameters as below.

\begin{figure}[!ht]
\centering
\includegraphics[width=\textwidth]{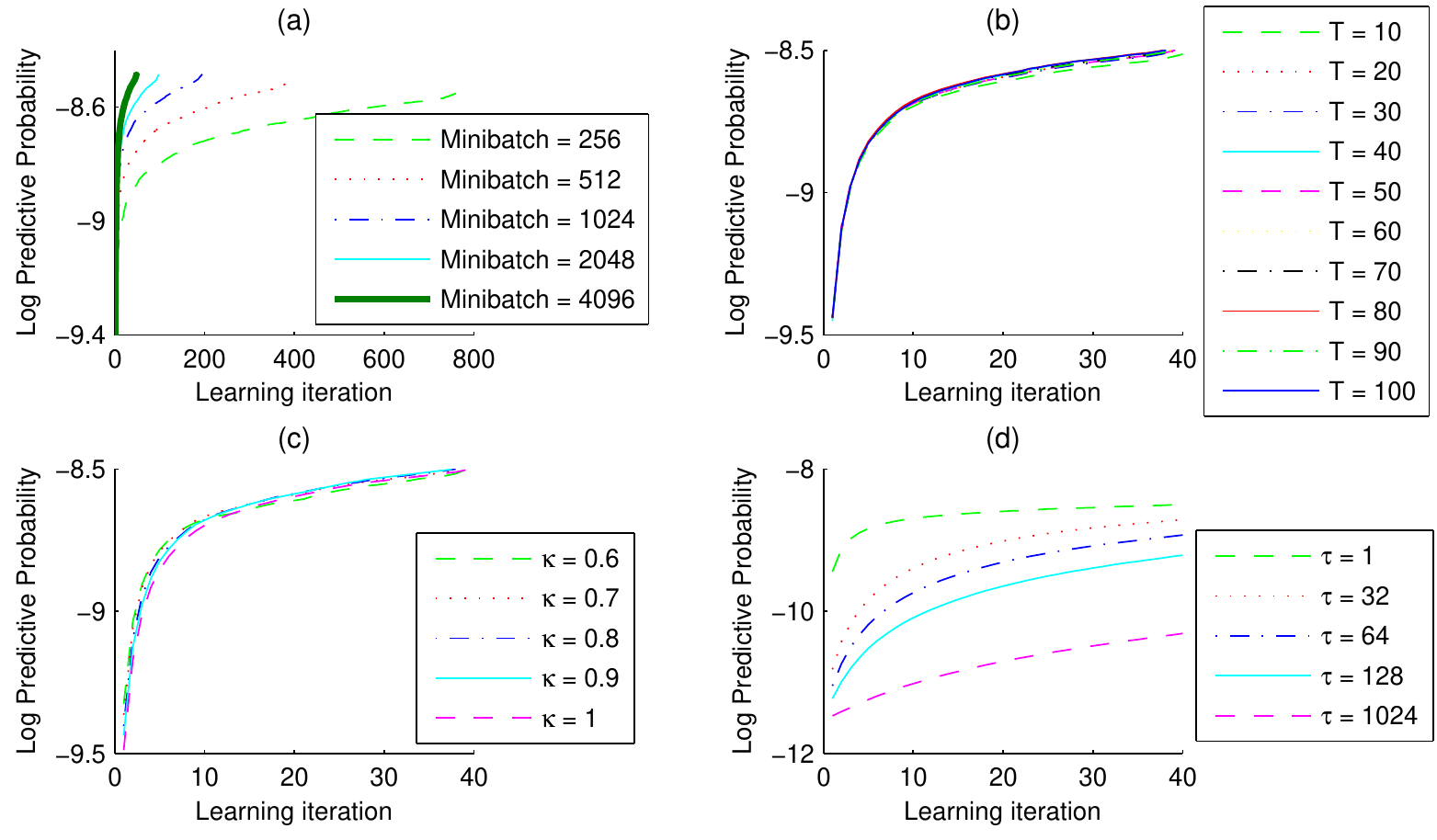}
\caption{Sensitivity of ML-OPE when changing its parameters. (a) Change the minibatch size when fixed $\{ \kappa=0.9, \tau=1, T=50\}$. (b) Change the number $T$ of iterations for OPE when fixed $\{ \kappa=0.9, \tau=1\}$. (c) Change the forgetting rate $\kappa$ when fixed $\{ \tau=1, T=50\}$. (d) Change $\tau$ when fixed $\{ \kappa=0.9, T=50\}$. The minibatch size in the cases of (b), (c), (d) is 5000. All of these experiments were done on New York Times, with $K=100$ topics.}
\label{fig:ML-OPE-sensitivity}
\end{figure}

We now consider the effects of the parameters on the performance of our new learning methods. The parameters include: the forgetting rate $\kappa$, $\tau$, the number $T$ of iterations for OPE, and the minibatch size. Inappropriate choices of those parameters might affect significantly the performance. To see the effect of a parameter, we changed its values in a finite set, but fixed the other parameters. We took ML-OPE into consideration, and results of our experiments are depicted in Figure~\ref{fig:ML-OPE-sensitivity}.

We observe that $\kappa$ and $T$ did not significantly affect the performance of ML-OPE. These behaviors of ML-OPE are interesting and beneficial in practice. Indeed, we do not have to consider much about the effect of the forgetting rate $\kappa$ and thus no expensive model selection is necessary. Figure~\ref{fig:ML-OPE-sensitivity}(b) reveals a much more interesting behavior of OPE. One easily observes that more iterations in OPE did not necessarily help the performance of ML-OPE. Just $T=20$ iterations for OPE resulted in a comparable predictiveness level as $T=100$. It suggests that OPE converges very fast in practice, and that $T=20$ might be enough for practical employments of OPE. This behavior is really beneficial in practice, especially for massive data or streaming data. 

$\tau$ and minibatch size did affect ML-OPE significantly.  Similar with the observation by \cite{Hoffman2013SVI} for SVI, we find that ML-OPE performed consistently better as the minibatch size increased. It can reach to a very high predictiveness level with a fast rate. In contrast, ML-OPE performed worse as $\tau$ increased. The method performed  best at $\tau=1$. It is worth noting that the dependence between the performance of ML-OPE and \{$\tau$, minibatch size\} is monotonic. Such a behavior enables us to easily choose a good setting for the parameters of ML-OPE in practice.

\subsection{Speed of inference methods}

\begin{figure}[tp]
\centering
\includegraphics[width=\textwidth]{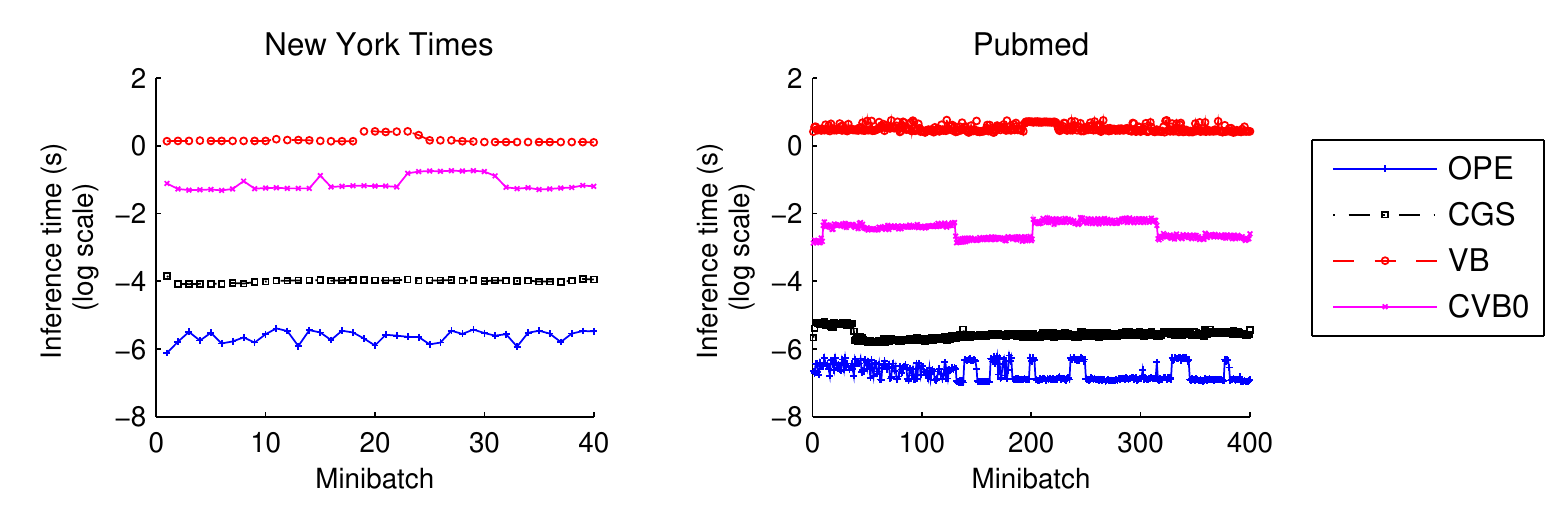}
\caption{Average time to do inference for a doccument as the number of minibatches increases. Lower is faster. Note that OPE often performs many times faster than CGS, and hundreds times faster than VB.}
\label{fig-OPE-inf-time}
\end{figure}

Next we investigate the speed of inference. We took VB, CVB0, CGS, and OPE into consideration. For all of these methods, we compute the average time to do inference for a document at every minibatch when learning LDA. Figure \ref{fig-OPE-inf-time} depicts the results. We find that among 4 inference methods, OPE consumed a modest amount of time, while CGS needed slightly more time. VB needed intensive time to do inference. The main reasons are that it requires various evaluations of expensive functions (e.g., log, exp, digamma), and that it needs to check convergence, which in our observation was often very expensive. Due to maintainance/update of many statistics which associate with each token in a document (see Table \ref{table 1: theoretical comparison}), CVB0 also consumed significant time. Note further that VB and CVB0 do not have any guarantee of  convergence rate. Hence in practice VB and CVB0 might converge slowly.

Figure \ref{fig-OPE-inf-time} suggests that OPE can perform fastest, compared with existing inference methods. Our investigation in the previous subsection demonstrates that OPE can find very good solutions for the posterior estimation problem. Those observations suggests that OPE is a good candidate for posterior inference in various situations.

\subsection{Convergence and stability of OPE in practice}

Our last investigation is about whether or not OPE performs stably in practice. We have to consider this behavior as there are two probabilistic steps in OPE: initialization of $\mbf{\theta_1}$ and pick of $f_{t}$. To see  stability, we took 100 testing documents  from New York Times to do inference given the 100-topic LDA model previously learned by ML-OPE. For each document, we did 10 random runs for OPE, saved the objective values of the last iterates, and then computed the standard deviation of the objective values. 

Stability of OPE is assessed via the standard deviation of the objective values. The smaller, the more stable. Figure \ref{fig-OPE-stability} shows the histogram of the standard deviations computed from 100 functions. Each $T$ corresponds to a choice of the number of iterations for OPE.

\begin{figure}[tp]
\centering
\includegraphics[width=\textwidth]{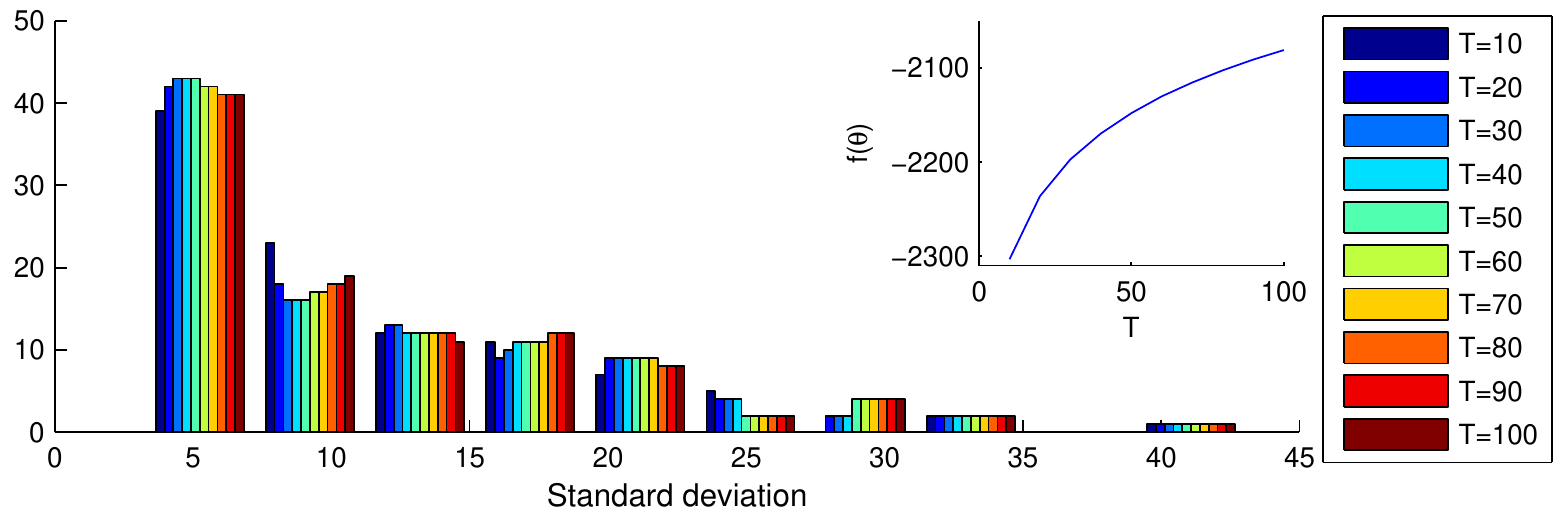}
\caption{Convergence and stability of OPE in practice. The top-right corner shows convergence of OPE as allowing more iterations. The stability of OPE is measured by the standard deviation of the objective values ($f(\mbf{\theta})$) from 10 random runs. The histogram (bar) for each setting of the number $T$ of iterations is computed from  100 different functions. This experiment was done on 100 documents of New York Times.}
\label{fig-OPE-stability}
\end{figure}

Observing Figure \ref{fig-OPE-stability}, we find that the standard deviation is small ($\le 20$) for a large amount of functions. Comparing with the mean value of $f(\mbf{\theta})$ which often belonged to $[-2300, -2000]$, the deviation is very small in magnitude. This suggests that for each function, the objective values returned by OPE from 10 runs seem not to  significantly differ from each other. In other words, OPE behaved very stable in our observation.

Figure \ref{fig-OPE-stability} also suggests that OPE converges very fast. The growth of $f(\mbf{\theta})$ seems to be close to a linear function. This agrees well with Theorem \ref{the9} on convergence rate of OPE.

\section{Conclusion} \label{sec-Conclusion}
We have discussed how posterior inference for individual texts in topic models can be done efficiently. Our novel algorithm (OPE) is the first one which has a theorerical guarantee on quality and fast convergence rate. In practice, OPE can do inference very fast, and can be easily extended to a wide range of contexts including MAP estimation and non-convex optimization. By exploiting OPE carefully, we have arrived at new efficient methods for learning LDA from data streams or large corpora: ML-OPE, Online-OPE, and Streaming-OPE. Among those, ML-OPE and Online-OPE can reach state-of-the-art performance at a high speed. Furthermore, Online-OPE surpasses all existing methods in terms of predictiveness, and works well with short text. As a result, they are good candidates to help us deal with text streams and big data. The code of those methods are available at \url{http://github.com/Khoat/OPE/}.

\acks{This research is funded by Vietnam National Foundation for Science and Technology Development (NAFOSTED) under Grant Number 102.05-2014.28 and by the Air Force Office of Scientific Research (AFOSR), Asian Office of Aerospace Research \& Development (AOARD), and US Army International Technology Center, Pacific (ITC-PAC) under Award Number FA2386-15-1-4011.}

\bibliography{../topic-models-all,../other-all,../my-papers}

\begin{thebibliography}{38}
\providecommand{\natexlab}[1]{#1}
\providecommand{\url}[1]{\texttt{#1}}
\expandafter\ifx\csname urlstyle\endcsname\relax
  \providecommand{\doi}[1]{doi: #1}\else
  \providecommand{\doi}{doi: \begingroup \urlstyle{rm}\Url}\fi

\bibitem[Aletras and Stevenson(2013)]{Aletras2013evaluating}
Nikolaos Aletras and Mark Stevenson.
\newblock Evaluating topic coherence using distributional semantics.
\newblock In \emph{Proceedings of the 10th International Conference on
  Computational Semantics}, pages 13--22, 2013.

\bibitem[Arora et~al.(2016)Arora, Ge, Koehler, Ma, and Moitra]{Arora+2016infer}
Sanjeev Arora, Rong Ge, Frederic Koehler, Tengyu Ma, and Ankur Moitra.
\newblock Provable algorithms for inference in topic models.
\newblock In \emph{ICML}, Journal of Machine Learning Research: W\&CP, 2016.

\bibitem[Asuncion et~al.(2009)Asuncion, Welling, Smyth, and
  Teh]{Asuncion+2009smoothing}
A.~Asuncion, M.~Welling, P.~Smyth, and Y.W. Teh.
\newblock On smoothing and inference for topic models.
\newblock In \emph{Proceedings of the Twenty-Fifth Conference on Uncertainty in
  Artificial Intelligence}, pages 27--34, 2009.

\bibitem[Blei(2012)]{Blei2012introduction}
David~M Blei.
\newblock Probabilistic topic models.
\newblock \emph{Communications of the ACM}, 55\penalty0 (4):\penalty0 77--84,
  2012.

\bibitem[Blei et~al.(2003)Blei, Ng, and Jordan]{BNJ03}
David~M. Blei, Andrew~Y. Ng, and Michael~I. Jordan.
\newblock Latent dirichlet allocation.
\newblock \emph{Journal of Machine Learning Research}, 3\penalty0 (3):\penalty0
  993--1022, 2003.

\bibitem[Bottou(1998)]{Bottou1998stochastic}
L{\'e}on Bottou.
\newblock Online learning in neural networks.
\newblock chapter Online Learning and Stochastic Approximations, pages 9--42.
  Cambridge University Press, 1998.

\bibitem[Bouma(2009)]{Bouma2009NPMI}
Gerlof Bouma.
\newblock Normalized (pointwise) mutual information in collocation extraction.
\newblock \emph{Proceedings of GSCL}, pages 31--40, 2009.

\bibitem[Broderick et~al.(2013)Broderick, Boyd, Wibisono, Wilson, and
  Jordan]{Broderick2013streaming}
Tamara Broderick, Nicholas Boyd, Andre Wibisono, Ashia~C Wilson, and Michael
  Jordan.
\newblock Streaming variational bayes.
\newblock In \emph{Advances in Neural Information Processing Systems}, pages
  1727--1735, 2013.

\bibitem[Clarkson(2010)]{Clarkson2010}
Kenneth~L. Clarkson.
\newblock Coresets, sparse greedy approximation, and the frank-wolfe algorithm.
\newblock \emph{ACM Trans. Algorithms}, 6:\penalty0 63:1--63:30, 2010.
\newblock ISSN 1549-6325.
\newblock \doi{http://doi.acm.org/10.1145/1824777.1824783}.
\newblock URL \url{http://doi.acm.org/10.1145/1824777.1824783}.

\bibitem[Dai et~al.(2016)Dai, He, Dai, and Song]{Dai2016pmd}
Bo~Dai, Niao He, Hanjun Dai, and Le~Song.
\newblock Provable bayesian inference via particle mirror descent.
\newblock In \emph{Proceedings of the 19th International Conference on
  Artificial Intelligence and Statistics}, pages 985--994, 2016.

\bibitem[Foulds et~al.(2013)Foulds, Boyles, DuBois, Smyth, and
  Welling]{Foulds2013stochastic}
James Foulds, Levi Boyles, Christopher DuBois, Padhraic Smyth, and Max Welling.
\newblock Stochastic collapsed variational bayesian inference for latent
  dirichlet allocation.
\newblock In \emph{Proceedings of the 19th ACM SIGKDD International Conference
  on Knowledge Discovery and Data Mining}, pages 446--454. ACM, 2013.

\bibitem[Gao et~al.(2015)Gao, Sun, Wang, Liu, Yan, and Zeng]{GaoSWYZ15}
Yang Gao, Zhenlong Sun, Yi~Wang, Xiaosheng Liu, Jianfeng Yan, and Jia Zeng.
\newblock A comparative study on parallel lda algorithms in mapreduce
  framework.
\newblock In \emph{Advances in Knowledge Discovery and Data Mining}, volume
  9078 of \emph{LNCS}, pages 675--689. Springer, 2015.

\bibitem[Gerrish and Blei(2012)]{GerrishB2012vote}
Sean Gerrish and David Blei.
\newblock How they vote: Issue-adjusted models of legislative behavior.
\newblock In \emph{Advances in Neural Information Processing Systems},
  volume~25, pages 2762--2770, 2012.

\bibitem[Ghadimi and Lan(2013)]{Ghadimi2013stochasticNonconvex}
Saeed Ghadimi and Guanghui Lan.
\newblock Stochastic first-and zeroth-order methods for nonconvex stochastic
  programming.
\newblock \emph{SIAM Journal on Optimization}, 23\penalty0 (4):\penalty0
  2341--2368, 2013.

\bibitem[Griffiths and Steyvers(2004)]{GriffithsS2004}
T.L. Griffiths and M.~Steyvers.
\newblock Finding scientific topics.
\newblock \emph{Proceedings of the National Academy of Sciences of the United
  States of America}, 101\penalty0 (Suppl 1):\penalty0 5228, 2004.

\bibitem[Grimmer(2010)]{Grimmer2010Political}
Justin Grimmer.
\newblock A bayesian hierarchical topic model for political texts: Measuring
  expressed agendas in senate press releases.
\newblock \emph{Political Analysis}, 18\penalty0 (1):\penalty0 1--35, 2010.
\newblock \doi{10.1093/pan/mpp034}.
\newblock URL \url{http://pan.oxfordjournals.org/content/18/1/1.abstract}.

\bibitem[Hazan and Kale(2012)]{Hazan2012OFW}
Elad Hazan and Satyen Kale.
\newblock Projection-free online learning.
\newblock In \emph{Proceedings of the 29th Annual International Conference on
  Machine Learning (ICML)}, 2012.

\bibitem[Hoffman et~al.(2013)Hoffman, Blei, Wang, and Paisley]{Hoffman2013SVI}
Matthew~D Hoffman, David~M Blei, Chong Wang, and John Paisley.
\newblock Stochastic variational inference.
\newblock \emph{The Journal of Machine Learning Research}, 14\penalty0
  (1):\penalty0 1303--1347, 2013.

\bibitem[Lau et~al.(2014)Lau, Newman, and Baldwin]{Lau2014npmi}
Jey~Han Lau, David Newman, and Timothy Baldwin.
\newblock Machine reading tea leaves: Automatically evaluating topic coherence
  and topic model quality.
\newblock In \emph{Proceedings of the Association for Computational
  Linguistics}, pages 530--539, 2014.

\bibitem[Liu et~al.(2010)Liu, Liu, Tsykin, Goodall, Green, Zhu, Kim, and
  Li]{LiuLT10miRNA}
B.~Liu, L.~Liu, A.~Tsykin, G.J. Goodall, J.E. Green, M.~Zhu, C.H. Kim, and
  J.~Li.
\newblock Identifying functional mirna--mrna regulatory modules with
  correspondence latent dirichlet allocation.
\newblock \emph{Bioinformatics}, 26\penalty0 (24):\penalty0 3105, 2010.

\bibitem[Mai et~al.(2016)Mai, Mai, Nguyen, Van~Linh, and Than]{Mai2016hdp}
Khai Mai, Sang Mai, Anh Nguyen, Ngo Van~Linh, and Khoat Than.
\newblock Enabling hierarchical dirichlet processes to work better for short
  texts at large scale.
\newblock In \emph{Advances in Knowledge Discovery and Data Mining}, volume
  9652 of \emph{LNCS}, pages 431--442. 2016.

\bibitem[Mairal(2013)]{Mairal2013stochasticNonconvex}
Julien Mairal.
\newblock Stochastic majorization-minimization algorithms for large-scale
  optimization.
\newblock In \emph{Advances in Neural Information Processing Systems}, pages
  2283--2291, 2013.

\bibitem[McInerney et~al.(2015)McInerney, Ranganath, and
  Blei]{McInerney2015population}
James McInerney, Rajesh Ranganath, and David Blei.
\newblock The population posterior and bayesian modeling on streams.
\newblock In \emph{Advances in Neural Information Processing Systems}, pages
  1153--1161, 2015.

\bibitem[Mimno(2012)]{Mimno2012historiography}
David Mimno.
\newblock Computational historiography: Data mining in a century of classics
  journals.
\newblock \emph{Journal on Computing and Cultural Heritage}, 5\penalty0
  (1):\penalty0 3, 2012.

\bibitem[Mimno et~al.(2012)Mimno, Hoffman, and Blei]{MimnoHB12}
David Mimno, Matthew~D. Hoffman, and David~M. Blei.
\newblock Sparse stochastic inference for latent dirichlet allocation.
\newblock In \emph{Proceedings of the 29th Annual International Conference on
  Machine Learning}, 2012.

\bibitem[Patterson and Teh(2013)]{Patterson2013stochastic}
Sam Patterson and Yee~Whye Teh.
\newblock Stochastic gradient riemannian langevin dynamics on the probability
  simplex.
\newblock In \emph{Advances in Neural Information Processing Systems}, pages
  3102--3110, 2013.

\bibitem[Pritchard et~al.(2000)Pritchard, Stephens, and
  Donnelly]{PritchardSD2000population}
Jonathan~K Pritchard, Matthew Stephens, and Peter Donnelly.
\newblock Inference of population structure using multilocus genotype data.
\newblock \emph{Genetics}, 155\penalty0 (2):\penalty0 945--959, 2000.

\bibitem[Sato and Nakagawa(2015)]{SatoN2015SCVB0}
Issei Sato and Hiroshi Nakagawa.
\newblock Stochastic divergence minimization for online collapsed variational
  bayes zero inference of latent dirichlet allocation.
\newblock In \emph{Proceedings of the 21th ACM SIGKDD International Conference
  on Knowledge Discovery and Data Mining}, pages 1035--1044. ACM, 2015.

\bibitem[Schwartz et~al.(2013)Schwartz, Eichstaedt, Dziurzynski, Kern,
  Seligman, Ungar, Blanco, Kosinski, and Stillwell]{Schwartz+2013Personality}
H~Andrew Schwartz, Johannes~C Eichstaedt, Lukasz Dziurzynski, Margaret~L Kern,
  Martin~EP Seligman, Lyle~H Ungar, Eduardo Blanco, Michal Kosinski, and David
  Stillwell.
\newblock Toward personality insights from language exploration in social
  media.
\newblock In \emph{AAAI Spring Symposium Series}, 2013.

\bibitem[Simsekli et~al.(2016)Simsekli, Badeau, Richard, and
  Cemgil]{Simsekli2016stochastic}
Umut Simsekli, Roland Badeau, Ga{\"e}l Richard, and Taylan Cemgil.
\newblock Stochastic quasi-newton langevin monte carlo.
\newblock In \emph{Proceedings of the 33th International Conference on Machine
  Learning (ICML)}, 2016.

\bibitem[Sontag and Roy(2011)]{SontagR11}
David Sontag and Daniel~M. Roy.
\newblock Complexity of inference in latent dirichlet allocation.
\newblock In \emph{Advances in Neural Information Processing Systems (NIPS)},
  2011.

\bibitem[Tang et~al.(2014)Tang, Meng, Nguyen, Mei, and
  Zhang]{Tang2014understandingLDA}
Jian Tang, Zhaoshi Meng, Xuanlong Nguyen, Qiaozhu Mei, and Ming Zhang.
\newblock Understanding the limiting factors of topic modeling via posterior
  contraction analysis.
\newblock In \emph{Proceedings of The 31st International Conference on Machine
  Learning}, pages 190--198, 2014.

\bibitem[Teh et~al.(2016)Teh, Thiery, and Vollmer]{Teh2016consistencySGLD}
Yee~Whye Teh, Alexandre~H Thiery, and Sebastian~J Vollmer.
\newblock Consistency and fluctuations for stochastic gradient langevin
  dynamics.
\newblock \emph{Journal of Machine Learning Research}, 17\penalty0
  (7):\penalty0 1--33, 2016.

\bibitem[Teh et~al.(2007)Teh, Newman, and Welling]{TehNW2007collapsed}
Y.W. Teh, D.~Newman, and M.~Welling.
\newblock A collapsed variational bayesian inference algorithm for latent
  dirichlet allocation.
\newblock In \emph{Advances in Neural Information Processing Systems},
  volume~19, page 1353, 2007.

\bibitem[Than and Doan(2014)]{ThanD14dolda}
Khoat Than and Tung Doan.
\newblock Dual online inference for latent dirichlet allocation.
\newblock In \emph{Proceedings of the 6th Asian Conference on Machine Learning
  (ACML)}, volume~39 of \emph{Journal of Machine Learning Research: W\&CP},
  pages 80--95, 2014.

\bibitem[Than and Ho(2015)]{ThanH15sparsity}
Khoat Than and Tu~Bao Ho.
\newblock Inference in topic models: sparsity and trade-off.
\newblock \emph{arXiv preprint arXiv:1512.03300}, 2015.
\newblock URL \url{http://arxiv.org/abs/1512.03300}.

\bibitem[Than and Ho(2012)]{ThanH2012fstm}
Khoat Than and Tu~Bao Ho.
\newblock Fully sparse topic models.
\newblock In Peter Flach, Tijl De~Bie, and Nello Cristianini, editors,
  \emph{Machine Learning and Knowledge Discovery in Databases}, volume 7523 of
  \emph{Lecture Notes in Computer Science}, pages 490--505. Springer, 2012.

\bibitem[Yuille and Rangarajan(2003)]{Yuille2003cccp}
Alan~L Yuille and Anand Rangarajan.
\newblock The concave-convex procedure.
\newblock \emph{Neural computation}, 15\penalty0 (4):\penalty0 915--936, 2003.

\end{thebibliography}

\appendix 

\section{Predictive Probability}
\label{appendix--perp}

Predictive Probability shows the predictiveness and generalization of a model $\mathcal{M}$ on new data. We followed the procedure in \citep{Hoffman2013SVI} to compute this quantity. For each document in a testing dataset, we divided randomly into two disjoint parts $\mbf{w}_{obs}$ and $\mbf{w}_{ho}$ with a ratio of 70:30. We next did inference for $\mbf{w}_{obs}$ to get an estimate of $\mathbb{E}(\mbf{\theta}^{obs})$. Then we approximated the predictive probability as 
\[
\Pr(\mbf{w}_{ho} | \mbf{w}_{obs}, \mathcal{M}) \approx \prod_{w \in \mbf{w}_{ho} } \sum_{k=1}^{K} \mathbb{E}(\mbf{\theta}^{obs}_k) \mathbb{E}(\mbf{\beta}_{kw}),
\]
\[
\text{Log Predictive Probability} = \frac{\log \Pr(\mbf{w}_{ho} | \mbf{w}_{obs}, \mathcal{M})}  {|\mbf{w}_{ho}|},
\]
where $\mathcal{M}$ is the model to be measured. We estimated $\mathbb{E}(\mbf{\beta}_k) \propto \mbf{\lambda}_k$ for the learning methods which maintain a variational distribution ($\mbf{\lambda}$) over topics. Log Predictive Probability was averaged from 5 random splits, each was on 1000 documents.

\section{NPMI}

\emph{NPMI}  \citep{Aletras2013evaluating,Bouma2009NPMI} is the measure to help us see the coherence or semantic quality of individual topics. According to \cite{Lau2014npmi}, NPMI  agrees well with human evaluation on interpretability of topic models. For each topic $t$, we take the set $\{w_1, w_2, ..., w_n\}$ of top $n$ terms with highest probabilities. We then computed

\[
NPMI(t) = \frac{2}{n(n-1)} \sum_{j=2}^{n} \sum_{i=1}^{j-1} \frac{\log \frac{P(w_j, w_i)}{P(w_j) P(w_i)}}{- \log P(w_j, w_i)},
\]
where $P(w_i, w_j)$ is the probability that  terms $w_i$ and $w_j$ appear together in a document. We estimated those probabilities from the training data. In our experiments, we chose top $n=10$ terms for each topic.

Overall, NPMI of a model with $K$ topics is averaged as:
\[
NPMI = \frac{1}{K} \sum_{t=1}^{K} NPMI(t).
\]

\end{document}